\theoremstyle{plain}
\newtheorem{theorem}{Theorem}[section]
\newtheorem{proposition}[theorem]{Proposition}
\newtheorem{proposition*}{Proposition}
\theoremstyle{definition}
\theoremstyle{remark}
\def\eqref#1{equation~\ref{#1}}
\def\1{\bm{1}}
\def\rvu{{\mathbf{i}}}
\def\rvr{{\mathbf{r}}}
\def\rvt{{\mathbf{t}}}
\def\rvu{{\mathbf{u}}}
\def\rvw{{\mathbf{w}}}
\def\rvx{{\mathbf{x}}}
\def\rvy{{\mathbf{y}}}
\def\rvz{{\mathbf{z}}}
\def\vmu{{\bm{\mu}}}
\def\vtheta{{\bm{\theta}}}
\def\ve{{\bm{e}}}
\def\vf{{\bm{f}}}
\def\vg{{\bm{g}}}
\def\vk{{\bm{k}}}
\def\vr{{\bm{r}}}
\def\vs{{\bm{s}}}
\def\vu{{\bm{u}}}
\def\vv{{\bm{v}}}
\def\vw{{\bm{w}}}
\def\vx{{\bm{x}}}
\def\vy{{\bm{y}}}
\def\mI{{\bm{I}}}
\def\mL{{\bm{L}}}
\def\mM{{\bm{M}}}
\def\mS{{\bm{S}}}
\DeclareMathAlphabet{\mathsfit}{\encodingdefault}{\sfdefault}{m}{sl}
\SetMathAlphabet{\mathsfit}{bold}{\encodingdefault}{\sfdefault}{bx}{n}
\def\gA{{\mathcal{A}}}
\def\gC{{\mathcal{C}}}
\def\gD{{\mathcal{D}}}
\def\gF{{\mathcal{F}}}
\def\gG{{\mathcal{G}}}
\def\gN{{\mathcal{N}}}
\def\gP{{\mathcal{P}}}
\def\gQ{{\mathcal{Q}}}
\def\sR{{\mathbb{R}}}
\newcommand{\E}{\mathbb{E}}
\newcommand{\R}{\mathbb{R}}
\newcommand{\kl}[2]{D_{\mathrm{KL}}\!\left(#1 ~ \| ~ #2\right)}
\DeclareMathOperator*{\argmin}{arg\,min}
\icmltitlerunning{Variational Control for Guidance in Diffusion Models}
\definecolor{algoshade}{HTML}{dedbd2}
\definecolor{algoshade2}{HTML}{e3d5ca}
\definecolor{algoshade3}{HTML}{dee2e6}
\definecolor{alt_algo}{HTML}{C5C2AD}
\begin{document}

\twocolumn[
\icmltitle{Variational Control for Guidance in Diffusion Models}

\icmlsetsymbol{equal}{*}

\begin{icmlauthorlist}
\icmlauthor{Kushagra Pandey}{equal,doc}
\icmlauthor{Farrin Marouf Sofian}{equal,doc}
\icmlauthor{Felix Draxler}{doc}
\icmlauthor{Theofanis Karaletsos}{czi}
\icmlauthor{Stephan Mandt}{doc,dos}
\end{icmlauthorlist}

\icmlaffiliation{doc}{Department of Computer Science, University of California, Irvine}
\icmlaffiliation{czi}{Chan-Zuckerberg Initiative}
\icmlaffiliation{dos}{Department of Statistics, University of California, Irvine}

\icmlcorrespondingauthor{Kushagra Pandey}{pandeyk1@uci.edu}

\icmlkeywords{Diffusion Models, Inverse Problems, Optimal Control}

\vskip 0.3in
]

\printAffiliationsAndNotice{\icmlEqualContribution}

\begin{abstract}
Diffusion models exhibit excellent sample quality, but existing guidance methods often require additional model training or are limited to specific tasks. We revisit guidance in diffusion models from the perspective of variational inference and control, introducing \emph{Diffusion Trajectory Matching (DTM)} that enables guiding pretrained diffusion trajectories to satisfy a terminal cost. DTM unifies a broad class of guidance methods and enables novel instantiations. We introduce a new method within this framework that achieves state-of-the-art results on several linear, non-linear, and blind inverse problems without requiring additional model training or specificity to pixel or latent space diffusion models. Our code will be available at \url{https://github.com/czi-ai/oc-guidance}.
\end{abstract}

\section{Introduction}
\label{sec:intro}

Diffusion models \citep{sohl2015deep, ho2020denoising, songscore} and related families \citep{lipman2023flow, albergo2023building, liu2023flow} exhibit excellent synthesis quality in large-scale generative modeling applications. Additionally, due to their principled design, these models exhibit great potential in serving as powerful generative priors for downstream tasks \citep{daras2024surveydiffusionmodelsinverse}.

Consequently, guidance in diffusion models has received significant interest.
However, the dominant approaches to classifier guidance \citep{dhariwal2021diffusion} and classifier-free guidance \citep{ho2022classifier} require training additional models or retraining diffusion models for each conditioning task at hand, or are based on simplistic assumptions detrimental to sample quality \citep{kawar2022denoising, chung2022diffusion, song2022pseudoinverse, pandey2024fast}.

A trained diffusion model can be viewed as a steerable stochastic system that follows learned dynamics. By modifying its inputs, we can guide it in the spirit of stochastic optimal control — aiming to reach desired terminal states (e.g., matching a guidance signal) while staying close to its native trajectory distribution. This intuition motivates our formulation of classifier guidance as a \emph{variational control} problem \citep{kappen2008stochastic}. Inspired by \emph{Control as Inference} frameworks \citep{Kappen_2012, levine2018reinforcementlearningcontrolprobabilistic}, we model guided diffusion as a controlled Markov process, where control signals are treated as variational parameters. We then apply variational inference to optimize these controls, ensuring that generated samples satisfy terminal constraints without straying far from the unconditional sample manifold (see \cref{fig:main_fig}a). We refer to this framework as \emph{Diffusion Trajectory Matching} (DTM).

Recent work on steering diffusion models has already incorporated ideas from optimal control \citep{HuangGLHZSGOY24, rout2024rbmodulationtrainingfreepersonalizationdiffusion}. However, these works focus on a restricted class of control problems. This obscures the available design choices revealed through our novel framework. Indeed, we find that DTM generalizes and explicitly contains a large class of prior work on guidance. We demonstrate the utility of this generalization by introducing a new sampling algorithm that seamlessly integrates with state-of-the-art diffusion model samplers like DDIM \citep{song2022denoisingdiffusionimplicitmodels} and adapts well to diverse downstream tasks.

To summarize, we make the following contributions:

\begin{itemize}
    \item We propose \emph{Diffusion Trajectory Matching (DTM)}, a generalized framework for \emph{training-free} guidance based on a variational control perspective. DTM subsumes many existing and novel guidance methods.

    \item We instantiate our framework as \emph{Non-linear Diffusion Trajectory Matching (NDTM)}, which can be readily integrated with samplers like DDIM.

    \item We show that NDTM outperforms previous state-of-the-art baselines for solving challenging and diverse inverse problems with pretrained pixel-space (see Fig. \ref{fig:main_fig}) diffusion models. Furthermore, NDTM can be extended trivially to latent-space diffusion models like Stable Diffusion \citep{rombach2022high} for tasks like style guidance (see Fig. \ref{fig:main_fig_sd}).
\end{itemize}

\section{Background}
\label{sec:background}
\begin{figure*}[!ht]
    \centering
    \includegraphics[width=\linewidth]{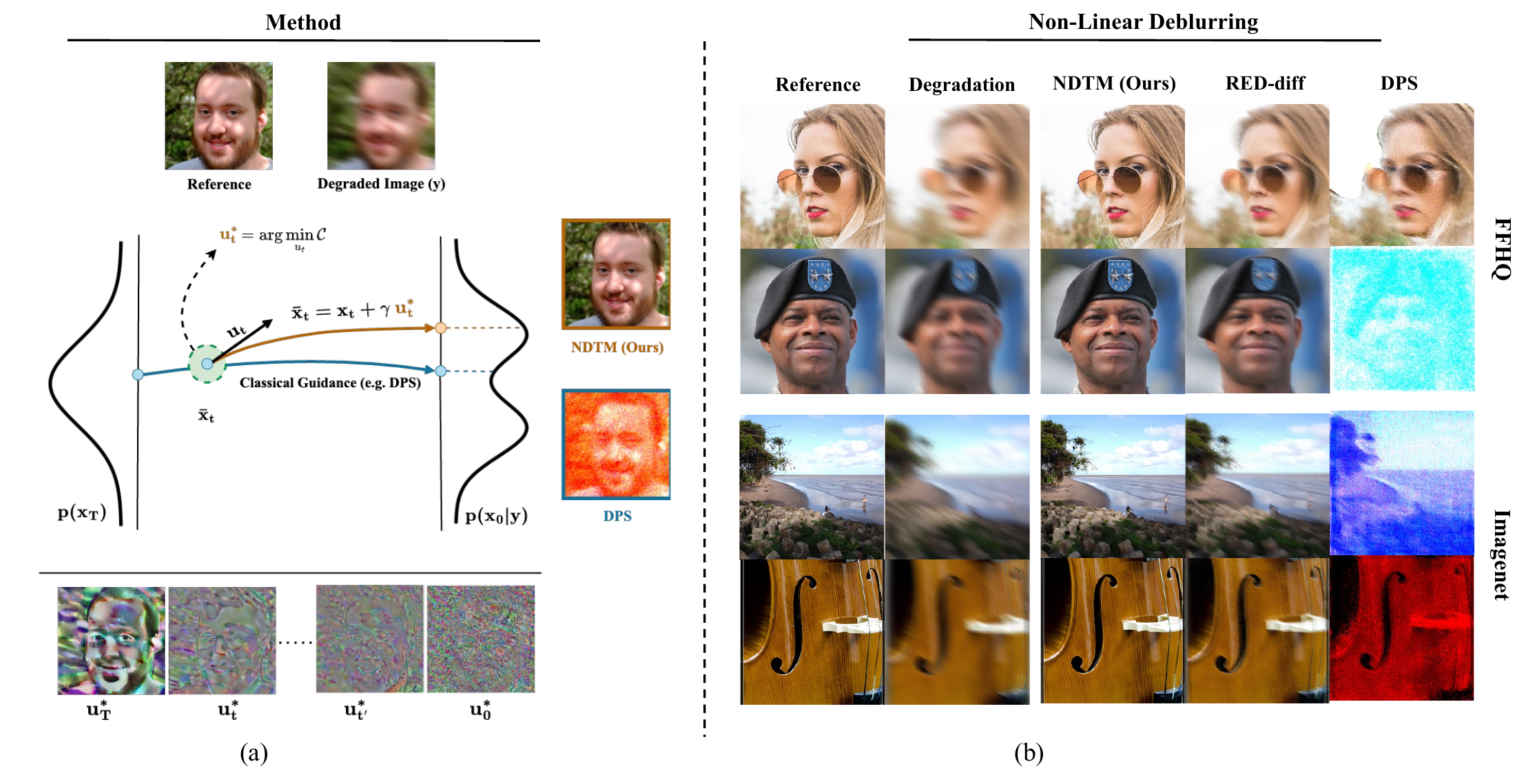}
    \caption{\textbf{Our method guides diffusion sampling to fulfill external constraints.} To this end, we optimize the local direction~$\rvu_t^*$ via external constraints while respecting the original trajectory, see \cref{eq:dtm_cost_final} (left, center). This recovers more accurate reconstructions across tasks compared to classical guidance methods: Nonlinear deblurring (Right). Our method accurately captures most details, while competing methods introduce artifacts in the generated reconstructions.}
    \label{fig:main_fig}
\end{figure*}
\textbf{Diffusion Models.} Given a perturbation kernel $p(\rvx_t|\rvx_0)=\gN(\mu_t\rvx_0, \sigma_t^2\mI_d)$, diffusion models \citep{sohl2015deep, ho2020denoising} invert the noising process by learning a corresponding reverse process parameterized as,
\begin{equation}
\label{eq:unguided-diffusion}
\gQ: q(\rvx_{0:T-1}|\rvx_T) = \prod_{t} q(\rvx_{t-1}|\rvx_t).
\end{equation}
The reverse diffusion posterior is specified as $q(\rvx_{t-1}|\rvx_t) = \gN(\vmu_\theta(\rvx_t, t), \sigma_t^2\mI_d)$ where $\vmu_\vtheta(.,.)$ is learned via score matching \citep{hyvarinen2005estimation,vincent2011connection,song2019generative}. Analogously continuous-time diffusion models \citep{songscore, karraselucidating} assume that a \textit{forward process}
\begin{equation}
    d\rvx_t = f(t)\rvx_t \, dt + g(t) \, d\rvw_t, \quad t \in [0, T],
    \label{eqn:fwd_process}
\end{equation}
with an drift $f(t)$ and diffusion coefficients $g(t)$ and standard Wiener process $\rvw_t$, converts data $\rvx_0 \in \R^d$ into noise $\rvx_T$. A \textit{reverse} SDE specifies how data is generated from noise \citep{ANDERSON1982313, songscore},
\begin{equation}
    d \rvx_t = \left[f(t)\rvx_t - g(t)^2 \nabla_{\rvx_t} \log p_t(\rvx_t)\right] \, dt + g(t) d\bar \rvw_t,
    \label{eq:reverse_time_diffusion}
\end{equation}
which involves the \textit{score} $\nabla_{\rvx_t} \log p_t(\rvx_t)$ of the marginal distribution over $\rvx_t$ at time $t$. The score is intractable to compute and is approximated using a parametric estimator $\vs_{\theta}(\rvx_t, t)$, trained using denoising score matching.

\textbf{Classifier Guidance in Diffusion Models.} Given a pretrained diffusion model $\vs_{\theta}(\rvx_t, t)$, it is often desirable to guide the diffusion process conditioned on input $\vy$. Consequently, the conditional diffusion dynamics read
\begin{equation}
    d \rvx_t = \big[f(t)\rvx_t - g(t)^2 \nabla_{\rvx_t} \log p(\rvx_t|\vy)\big] dt + g(t)d\bar{\vw}_t.
\end{equation}
In classifier guidance \citep{dhariwal2021diffusion}, the conditional score can be decomposed as 
\begin{equation}
    \nabla_{\rvx_t} \log p(\rvx_t|\vy) = \vs_{\theta}(\rvx_t, t) + \rho_t \nabla_{\rvx_t} \log p(\vy|\rvx_t).
    \label{eq:class_guidance}
\end{equation}
where $\rho_t$ is the guidance weight. The noisy likelihood score is often estimated by training a noise-conditioned estimator. It is also common to estimate this likelihood via $p(\vy|\rvx_t)=\int p(\rvx_0|\rvx_t)p(\rvy|\rvx_0) d\rvx_0$. For example, Diffusion Posterior Sampling (DPS) \citep{chung2022diffusion} approximates the diffusion posterior as, $p(\rvx_0|\rvx_t) = \delta(\E[\rvx_0|\rvx_t])$, where $\E[\rvx_0|\rvx_t]$ is Tweedie's estimate of the posterior at $\rvx_t$ \citep{tweedie}. This approximation of the diffusion posterior in DPS results in a high sampling budget and high sensitivity to the gradient weight $\rho_t$. More expressive approximations \citep{song2022pseudoinverse, pandey2024fast} result in specificity to linear inverse problems and thus cannot be extended to latent-space diffusion models trivially. We refer to \citet{daras2024surveydiffusionmodelsinverse} for an in-depth discussion on explicit approximations of the diffusion posterior. We will show in Section \ref{sec:classifier-guidance} that classifier guidance in diffusion models is a special case of our proposed framework. Next, we discuss our proposed framework, which generalizes to diverse tasks like inverse problems and style-guided generation without specificity to pixel or latent-space diffusion models.

\section{Guidance with Diffusion Trajectory Matching (DTM)}

We now propose a novel framework based on variational control for guidance in diffusion models. Our framework can be directly applied to pretrained diffusion models without requiring model retraining. For the remainder of our discussion, we restrict our attention to diffusion models and discuss an extension to flow-matching \cite{lipman2023flow} in Appendix \ref{subsec:flow_match}.

In the following, we first formulate guidance in discrete-time diffusion models as a variational optimal control problem (Section \ref{sec:dtm}) following \citet{Kappen_2012}, which we refer to as \emph{Diffusion Trajectory Matching} (DTM). We then present specific parameterizations of DTM in Section \ref{sec:ndtm}, which we work out as a guidance algorithm in Section \ref{sec:instantiate}. Lastly, in Appendix~\ref{subsec:cont_diff}, we transfer the DTM framework to continuous-time diffusion models \citep{songscore} and recover prior work in guidance in diffusion models.

\subsection{Variational Control for Diffusion Guidance}
\label{sec:dtm}

The idea of our guidance framework is to take a controlled deviation from the \textit{unguided diffusion trajectory} implied by \cref{eq:unguided-diffusion} in \cref{sec:background}, which we repeat for convenience:
\begin{equation}
\gQ: q(\rvx_{0:T-1}|\rvx_T) = \prod_{t} q(\rvx_{t-1}|\rvx_t).
\end{equation}
To steer the trajectory towards a target state fulfilling external constraints, we introduce a control signal $\rvu_t$ at every time $t$. This yields the following \emph{guided} dynamics for a given initial state $\rvx_T$:
\begin{equation}
\gP: p(\rvx_{0:T-1}|\rvx_T, \rvu_{1:T}) = \prod_{t} p(\rvx_{t-1}|\rvx_t, \rvu_t).
\label{eq:guided}
\end{equation}
While we model the guided dynamics as Markovian due to convenience, non-Markovian approximations are also possible~\citep{li2021detecting}. Given a set of external constraints, the task is to determine the variational control $\rvu_t$. Consequently, following \citet{Kappen_2012}, we can pose this problem as a stochastic optimal control problem with the terminal and transient costs formulated as,
\begin{align}
\gC(\rvx_T, \rvu_{1:T}) &= w_T\underbrace{\E_{\rvx_0 \sim p}[\Phi(\rvx_0)]}_{\text{Terminal Cost } \gC_\text{te}} + \underbrace{\kl{\gP}{\gQ}}_{\text{Transient Cost } \gC_\text{tr}}.
\label{eq:oc_cost}
\end{align}
The terminal cost in \cref{eq:oc_cost} encodes desirable constraints on the final guided state while the transient cost ensures that the guided trajectory does not deviate strongly from the unguided trajectory, so that the final guided state $\rvx_0$ lies near the image manifold. The two losses are traded by a scalar $w_T$.

\textbf{Choice of Terminal Cost.}
The terminal cost in \cref{eq:oc_cost} encodes desirable constraints on the final guided state. For instance, $\Phi(\rvx_0) \propto -\log p(\vy|\rvx_0)$ could be the log-likelihood of a probabilistic classifier for class-conditional generation or the degradation process for solving inverse problems (see Section \ref{sec:experiments} for exact form of terminal costs for different tasks). For instance, \citet{HuangGLHZSGOY24} adapts guidance for a non-differentiable terminal cost using a path integral control \citep{Kappen_2005} formulation. While an interesting direction for further work, we only assume that the terminal cost is differentiable for now.

\textbf{Choice of Divergence.} We use the KL-Divergence as it decomposes over individual timesteps,
\begin{align}
\gC_\text{tr} 
&= \sum_t \E_{\rvx_t}\big[\kl{p(\rvx_{t-1}|\rvx_t, \rvu_t)}{q(\rvx_{t-1}|\rvx_t)}\big].
\label{eq:oc_cost_simple}
\end{align}
Note that other divergence measures can be useful depending on the specific form of the diffusion posterior \citep{nachmani2021denoisingdiffusiongammamodels, zhou2023betadiffusion, pandey2024heavytaileddiffusionmodels, holderrieth2024generatormatchinggenerativemodeling}.

\textbf{Simplifications.} The proposed loss in \cref{eq:oc_cost} is generic and principled, but is difficult to jointly optimize for all controls $\rvu_{1:T}$ due to the need to backpropagate through the entire diffusion trajectory. To avoid this computational overhead, we make several simplifications that make the objective computationally tractable. We justify the validity of the modifications through our empirical results in \cref{sec:experiments}.

First, we optimize $\rvu_t$ in a greedy manner, that is at any time $t$ in the diffusion process we optimize $\rvu_{t}$, assuming that the remaining steps $t-1, \dots, 1$ are unguided. After optimizing for $\rvu_t$, we sample from the optimized posterior $\rvx_{t-1} \sim p(\rvx_{t-1}|\rvx_t, \rvu^*_t)$ and iterate. When the variational control of $\rvu_t$ is flexible enough, suboptimal greedy choices early in the trajectory can be compensated for later.

Second, we evaluate the terminal cost at the current \emph{expected} final guided state via Tweedie’s Formula for $\E[\rvx_0|\rvx_t, \rvu_t]$:
\begin{align}
\gC_\text{te} 
= \E_{\rvx_0}[\Phi(\rvx_0)] &
\approx \Phi(\E[\rvx_0|\rvx_t, \rvu_t])
= \Phi(\hat{\rvx}_0^t)
\end{align}
where we have approximated $p(\rvx_0|\rvx_t, \rvu_t) \approx \delta(\rvx_0 - \hat{\rvx}_0^t)$.

\textbf{Diffusion Trajectory Matching (DTM).} Together, the optimization problem to solve at time $t$ given a position $\rvx_t$ reads:
\begin{equation}
    \gC(\rvu_t) = w_T \Phi(\hat{\rvx}_0^t) + \kl{p(\rvx_{t-1}|\rvx_t,\rvu_t)}{q(\rvx_{t-1}|\rvx_t)}.
    \label{eq:dtm_cost_final}
\end{equation}
We refer to \cref{eq:dtm_cost_final} as \emph{Diffusion Trajectory Matching} (DTM).

\textbf{Continuous-Time Variants.} To apply DTM to continuous-time diffusion and flow matching, we adapt the transient costs $\gC_\text{tr}$. We call the following \textit{Continuous-Time Diffusion Trajectory Matching}, derived for continuous-time diffusion \cite{songscore} in \cref{subsec:cont_diff}:
\begin{equation}
    \gC_\text{tr} = \frac{g(t)^2}{2} \E_{\rvx_t} \Big[\big\Vert \vs_\vtheta(\rvx_t, t) - \vs_\vtheta(\rvx_t, \rvu_t, t) \big\Vert_2^2\Big].
    \label{eq:ctdtm}
\end{equation}
Similarly, for flow matching \cite{lipman2023flow,liu2023flow,albergo2023stochastic}, we refer to this as \textit{Flow Trajectory Matching} (FTM), see \cref{subsec:flow_match}:
\begin{equation}
    \gC_\text{tr} = \|\vv_\theta(\rvx_t, t) - \vv_\theta(\rvx_t, \rvu_t, t)\|^2.
\end{equation}

Next, we present a specific parameterization of our framework and its instantiation using standard diffusion models.

\begin{algorithm}[tb]
\caption{NDTM (DDIM). Sampling proceeds by inferring the control (\textcolor{alt_algo}{shaded}) followed by sampling from the guided posterior (\textcolor{gray}{shaded}) at any time $t$}
\label{alg:ndtm_ddim}
\begin{algorithmic}[1]
\STATE {\bfseries Input:} Optimization Steps: N, Guidance: $\gamma$, Pretrained denoiser: $\epsilon_\vtheta(.,.)$, Timestep schedule: $\{t\}_{j=0}^T$, DDIM Coefficients: $\alpha_t$, $\sigma_t$, Loss Weights: $\tau_t$, $\kappa_t$, $w_T$
\STATE {\bfseries Initialization:} $\rvx_T \sim \mathcal{N}(0, \mI_d)$
\FOR{$t = T$ \textbf{to} $1$}
    \STATE $\rvu_t^{(0)} = 0$
    \STATE $\hat{\ve}_{\text{uncond}} \gets \epsilon_\theta(\rvx_t, t)$
    \begin{tcolorbox}[algostyle3]
    \FOR{$i = 0$ \textbf{to} $N-1$}
        \STATE $\hat{\ve}_{\text{control}}^{(i)} \gets \epsilon_\theta(\rvx_t + \gamma \vu_t^{(i)}, t)$
        \STATE $\hat{\rvx}_0^{(i)} \gets \E[\rvx_0 | \rvx_t + \gamma \rvu_t^{(i)}]$
        \STATE $\gC_\text{score} = \tau_t^2\left\| \hat{\ve}_{\text{uncond}} - \hat{\ve}_{\text{control}}^{(i)} \right\|_2^2$
        \STATE $\gC_\text{control} = \kappa_t^2 \Vert \rvu_t \Vert_2^2$
        \STATE $\gC_\text{terminal} = w_T\Phi(\hat{\rvx}_0^{(i)})$
        \STATE $\gC_t^{(i)} \gets \gC_\text{score} + \gC_\text{control} + \gC_\text{terminal}$
        \STATE $\vu_t^{(i+1)} \gets \texttt{Update}(\vu_t^{(i)}, \nabla_{u_t} \gC_t^{(i)}$)
    \ENDFOR
    \end{tcolorbox}
    \begin{tcolorbox}[algostyle]
        \STATE $\rvx_{t-1} \gets \text{DDIM}(\rvx_t + \gamma \rvu_t^*, t)$
    \end{tcolorbox}
\ENDFOR
\STATE return $\rvx_0$
\end{algorithmic}
\end{algorithm}

\subsection{Non-linear Diffusion Trajectory Matching (NDTM)}
\label{sec:ndtm}

In the context of Gaussian diffusion models \citep{ho2020denoising, songscore, karraselucidating}, the unguided diffusion posterior is often parameterized as,
\begin{equation}
    q(\rvx_{t-1}|\rvx_t) = \gN(\vmu_\theta(\rvx_t, t), \sigma_t^2\mI_d)
    \label{eq:unguided_param}
\end{equation}
In analogy to how the unguided diffusion denoising process $q$ is parameterized, we define our controlled process $p$ as
\begin{equation}
    p(\rvx_{t-1}|\rvx_t, \rvu_t) = \gN(\vmu_\theta(\rvx_t, \rvu_t, t), \sigma_t^2\mI_d).
\end{equation}
From a practical standpoint, since unconditional score models are usually parameterized using neural networks with an input noisy state and a timestep embedding, we further parameterize the posterior mean $\vmu_\theta(\rvx_t, \rvu_t, t) = \vmu_\theta(\vf(\rvx_t, \rvu_t, t), t)$,
where the \emph{aggregation} function $\vf: \sR^d \times \sR^d \times \sR \rightarrow \sR^d$ combines the noisy state $\rvx_t$ and the control $\rvu_t$ appropriately. In this work, we choose an additive form of $\vf = \rvx_t + \gamma \rvu_t$ where $\gamma$ is the \emph{guidance weight} used to update the current noisy state $\rvx_t$ in the direction of the control signal $\rvu_t$. We leave exploring other aggregation functions as future work. Moreover, in practice, we sample from a single diffusion trajectory and therefore omit the expectation in \cref{eq:dtm_cost_final}. Consequently, the control cost in \cref{eq:dtm_cost_final} can be simplified as,
\begin{align}
    \gC(\rvu_t) = %
    \Vert \vmu_\vtheta(\rvx_t + \gamma\rvu_t, t) - \vmu_\vtheta(\rvx_t, t) \big\Vert_2^2 + w_T \Phi(\hat{\rvx}_0^t).
    \label{eq:ndtm_cost}
\end{align}
Due to the non-linear dependence of the guided posterior on the control signal $\rvu_t$, we refer to the transient cost specification in Eq. \ref{eq:ndtm_cost} as \emph{Non-Linear Diffusion Trajectory Matching} (NDTM). We will show in \cref{sec:classifier-guidance} that linear control can be formulated as a special case of this parameterization, yielding classifier guidance. Next, we instantiate the NDTM objective practically.

\subsection{Specific Instantiations}
\label{sec:instantiate}
Here, we present a simplified form of the NDTM objective in the context of DDIM \citep{song2022denoisingdiffusionimplicitmodels}.
\begin{proposition}
    \label{prop:simplified-objective}
    For the diffusion posterior parameterization in DDIM \citep{song2022denoisingdiffusionimplicitmodels}, the NDTM objective in Eq. \ref{eq:ndtm_cost} has the following tractable upper bound (see proof in Appendix \ref{subsec:ddim_proof}),
    \begin{equation}
        \gC(\rvu_t) \leq \kappa_t^2\big\Vert \rvu_t \big\Vert_2^2 + \tau_t^2 \big\Vert \epsilon_\theta(\bar{\rvx}_t, t) - \epsilon_\theta(\rvx_t, t) \big\Vert_2^2 + w_T\Phi(\hat{\rvx}_0^t),
        \label{eq:ntmc_ddim}
    \end{equation}
    where $\bar{\rvx}_t = \rvx_t + \gamma \rvu_t$ is the guided state and the coefficients $\kappa_t = \frac{\gamma \sqrt{\alpha_{t-1}}}{\sqrt{\alpha_t}}$ and $\tau_t = \sqrt{1 - \alpha_{t-1} - \sigma_t^2} - \frac{\sqrt{\alpha_{t-1}(1 - \alpha_t)}}{\sqrt{\alpha_t}}$ are time-dependent scalars.
\end{proposition}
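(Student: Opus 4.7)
The plan is to substitute the closed-form DDIM posterior mean directly into the quadratic transient cost of \cref{eq:ndtm_cost} and read off the two claimed terms. The key observation is that the DDIM update, expressed via the $\epsilon$-prediction, is affine in the noisy state $\rvx_t$, so the guided-minus-unguided mean separates cleanly into an $\rvu_t$-linear piece and a denoiser-difference piece whose coefficients turn out to be exactly $\kappa_t$ and $\tau_t$.

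Concretely, I would start from the DDIM posterior mean
$\vmu_\vtheta(\rvx_t,t) = \sqrt{\alpha_{t-1}}\,\hat{\rvx}_0(\rvx_t) + \sqrt{1-\alpha_{t-1}-\sigma_t^2}\,\epsilon_\vtheta(\rvx_t,t)$
and apply Tweedie's formula $\hat{\rvx}_0(\rvx_t) = (\rvx_t - \sqrt{1-\alpha_t}\,\epsilon_\vtheta(\rvx_t,t))/\sqrt{\alpha_t}$, then collect coefficients to obtain
$$\vmu_\vtheta(\rvx_t,t) = \tfrac{\sqrt{\alpha_{t-1}}}{\sqrt{\alpha_t}}\,\rvx_t + \tau_t\,\epsilon_\vtheta(\rvx_t,t),$$
with $\tau_t$ matching the expression in the proposition. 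Applying the same identity with $\rvx_t$ replaced by $\bar{\rvx}_t = \rvx_t + \gamma\rvu_t$ and subtracting, the $\rvx_t$-linear pieces cancel and leave
$$\vmu_\vtheta(\bar{\rvx}_t,t) - \vmu_\vtheta(\rvx_t,t) = \kappa_t\,\rvu_t + \tau_t\bigl(\epsilon_\vtheta(\bar{\rvx}_t,t) - \epsilon_\vtheta(\rvx_t,t)\bigr),$$
where $\kappa_t = \gamma\sqrt{\alpha_{t-1}}/\sqrt{\alpha_t}$ matches the proposition. Taking the squared norm of both sides and applying the elementary inequality $\|a+b\|_2^2 \leq 2\|a\|_2^2 + 2\|b\|_2^2$ (or, equivalently, AM-GM on the cross term) then produces the claimed upper bound, after which the terminal cost $w_T\Phi(\hat{\rvx}_0^t)$ is inherited unchanged from \cref{eq:ndtm_cost}.

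The argument has no real obstacle; it is essentially algebraic bookkeeping. The only place to be careful is in simplifying the coefficient $\sqrt{1-\alpha_{t-1}-\sigma_t^2} - \sqrt{\alpha_{t-1}(1-\alpha_t)}/\sqrt{\alpha_t}$ to identify it cleanly as $\tau_t$ without a sign error; as a sanity check I would verify that setting $\sigma_t = 0$ recovers the deterministic DDIM update. The remaining subtlety is that the elementary inequality introduces a constant factor of $2$ that the proposition statement absorbs into the time-dependent weights $\tau_t, \kappa_t$, consistent with the fact that these coefficients appear as tunable hyperparameters in \cref{alg:ndtm_ddim} anyway.
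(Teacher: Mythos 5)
Your derivation follows the paper's proof essentially line for line: expand the DDIM posterior mean as $\vmu_\vtheta(\rvx_t,t)=\frac{\sqrt{\alpha_{t-1}}}{\sqrt{\alpha_t}}\rvx_t+\tau_t\,\epsilon_\theta(\rvx_t,t)$, observe that the guided-minus-unguided difference is $\kappa_t\rvu_t+\tau_t(\epsilon_\theta(\bar\rvx_t,t)-\epsilon_\theta(\rvx_t,t))$, and bound the squared norm term-by-term. The one substantive divergence is the final inequality, and here you are actually the more careful party: the paper asserts $\Vert a+b\Vert_2^2\le\Vert a\Vert_2^2+\Vert b\Vert_2^2$ and attributes it to ``the triangle inequality,'' but the triangle inequality only gives $\Vert a+b\Vert_2^2\le(\Vert a\Vert+\Vert b\Vert)^2$, and the asserted bound fails whenever $\langle a,b\rangle>0$. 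The correct elementary bound is the one you use, $\Vert a+b\Vert_2^2\le 2\Vert a\Vert_2^2+2\Vert b\Vert_2^2$, so strictly speaking both your argument and the paper's only establish the displayed inequality with coefficients $2\kappa_t^2$ and $2\tau_t^2$, not $\kappa_t^2$ and $\tau_t^2$ as literally stated. Your closing remark that the factor of $2$ is absorbed into tunable weights is the right practical resolution (the paper itself immediately generalizes the coefficients to free weights $w_c,w_s$), but be aware that as written the proposition's constants are not justified by either proof; if you want the statement to be airtight, either carry the factor of $2$ or note the generalized-weights caveat explicitly rather than in passing.
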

The coefficients $\alpha_t$ and $\sigma_t$ are specific to DDIM (see \cref{subsec:ddim_proof} for more details). Intuitively, the simplified NDTM loss in \cref{eq:ntmc_ddim} measures the deviation between the guided and unguided dynamics, penalizing the magnitude of the control signal $\rvu_t$ (first term) and deviations in the noise predictions (second term). On the contrary, the terminal loss ensures that the \emph{expected} final guided state satisfies the external constraints. Therefore, the first two terms in \cref{eq:ntmc_ddim} act as regularizers on the control signal $\vu_t$. In \cref{subsec:cont_diff}, we derive this simplification also for continuous-time diffusion models. Lastly, it is worth noting that the control loss in Eq. \ref{eq:ntmc_ddim} can be generalized as,
\begin{equation}
    \gC(\rvu_t) = w_c\big\Vert \rvu_t \big\Vert_2^2 + w_s \big\Vert \epsilon_\theta(\bar{\rvx}_t, t) - \epsilon_\theta(\rvx_t, t) \big\Vert_2^2 + w_T\Phi(\hat{\rvx}_0^t)
\end{equation}
In this work, unless specified otherwise, we set $w_s=\tau_t^2$, $w_c=\kappa_t^2$. However, exploring alternative weighting schedules for different loss terms can be an interesting direction for further work.

\textbf{Putting it all together.} To summarize, at each diffusion time step $t$, we estimate the control signal $\rvu_t$ by minimizing the cost $\gC(\rvu_t)$ (for instance \cref{eq:ntmc_ddim} for DDIM). This iterative optimization allows the model to dynamically adjust the control to best align the trajectory with the desired terminal cost while minimizing deviations with the unguided diffusion trajectory. Finally, we sample from the guided posterior $p(\rvx_{t-1}|\rvx_t, \rvu_t^*)$. We provide a visual illustration of the NDTM algorithm in Fig. \ref{fig:main_fig}a and its pseudocode implementation in Algorithm \ref{alg:ndtm_ddim}.

\subsection{Connection to Existing Guidance Mechanisms}
\label{sec:classifier-guidance}

In this section, we rigorously establish a connection between optimal control and classifier guidance: Our variational formulation in \cref{sec:dtm} captures existing approaches. We derive this result in the continuous-time variant, as this allows for a closed-form solution of the control problem.

In particular, let us choose a linear parameterization of the guided score in \cref{eq:ctdtm}, that is $s_\theta(\rvx_t, \rvu_t, t) = s_\theta(\rvx_t, t) + \rvu_t$. Then, the transient cost reduces to:
\begin{equation}
    \gC_\text{tr} = \int \|\rvu_t\|^2 dt.
\end{equation}
This is exactly the case of the well-established \textit{Path Integral Control} \cite{Kappen_2005,kappen2008stochastic}. The solution of this optimal control problem in \cref{eq:oc_cost} reads \citep[Eq.~(34)]{kappen2008stochastic}:
\begin{equation}
    \rvu_t^* = g(t) w_T \nabla_{\rvx_t} \log \E_{p(\rvx_0|\rvx_t)}[\exp(-\Phi(\rvx_0))].
\end{equation}
Notably, if the terminal cost takes the form of a classifier likelihood $\Phi(\rvx_0) \propto -\log p(\rvy|\rvx_0)$, it can be shown \cite{HuangGLHZSGOY24} that the optimal control simplifies to classifier guidance  \cite{dhariwal2021diffusion}: $ \rvu_t^* = g(t) w_T \nabla_{\rvx_t} p(\rvy|\rvx_t) $. This puts a large class of methods approximating the expectation over the posterior $p(\rvx_0|\rvx_t)$ \citep{chung2022diffusion, song2022pseudoinverse, pandey2024fast, HuangGLHZSGOY24} into perspective: In terms of our DTM framework, they perform optimal control with a linear control mechanism (i.e. control added linearly to the score function). Empirically, we will see in \cref{sec:experiments} that our generalization to non-linear control provides significant performance improvements.

\section{Related Work}
\label{sec:related}
\textbf{Conditional Diffusion Models.} In general, the conditional score $\nabla_{\rvx_t} \log p(\rvx_t|\vy)$ needed for guided sampling can be learned during training \citep{saharia2022palette,podell2023sdxl,rombach2022high} or approximated during inference (see \citet{daras2024surveydiffusionmodelsinverse} for a detailed review). 

Here, we focus on training-free guidance during inference.
In this context, there has been some recent progress in approximating the noisy likelihood score (see Eq. \ref{eq:class_guidance}) by approximating the diffusion posterior $p(\rvx_0|\rvx_t)$. For instance, DPS \citep{chung2022diffusion} approximates the diffusion posterior by a Dirac distribution centered on Tweedie's estimate \citep{tweedie}. This has the advantage that the guidance can be adapted to linear and non-linear tasks alike. However, due to a crude approximation, DPS converges very slowly and, in our observation, could be unstable for certain tasks (see Table \ref{deblur}). Consequently, some recent work \citep{Yu_2023_ICCV, bansal2024universal} adds a correction term on top of the DPS update rule to better satisfy the constraints. MPGD \citep{he2024manifold} attempts to alleviate some of these issues by leveraging the manifold hypothesis. In contrast, our proposed method instead directly estimates the guided posterior at each sampling step, thus sidestepping the limitations of DPS in the first place.

It is worth noting that our method resembles DCPS \citep{janati2024divideandconquer}, which adopts a similar approach of learning a series of potentials to sample from the final posterior. However, DCPS involves an additional overhead of Langevin Monte Carlo sampling in addition to the posterior optimization step. Moreover, the control perspective adopted in this work helps contextualize prior work in guidance within our framework (see Section \ref{sec:classifier-guidance}), which is lacking in DCPS. Some recent methods like TFG \citep{ye2024tfg} also propose general frameworks for guidance in diffusion models. However, our proposed method does not fit within their framework.

More recent work \citep{kawar2022denoising, wang2023zeroshot, song2022pseudoinverse, pandey2024fast, pokle2024trainingfree, boys2023tweedie} relies on expressive approximations of the diffusion posterior. While this can result in accurate guidance and faster sampling, a large proportion of these methods are limited to linear inverse problems, which further limits their application to pixel space diffusion models. In contrast, our method can be adapted to generic inverse problems and is thus agnostic to the diffusion model architecture. Lastly, another line of work in inverse problems approximates the data posterior $p(\rvx_0|\rvy)$ using variational inference \citep{Blei_2017, zhang2018advances}. For instance, RED-diff \citep{reddiff} proposes to learn an unimodal approximation to the data posterior by leveraging a diffusion prior. However, this can be too restrictive in practice and comes at the expense of blurry samples. We refer interested readers to \citet{daras2024surveydiffusionmodelsinverse} for a more detailed review of training-free methods for solving inverse problems in diffusion models.

\textbf{Optimal Control for Diffusion Models.} There has been some recent interest in exploring connections between stochastic optimal control and diffusion models \citep{berner2024an}. \citet{chen2024generative} leverages ideas from control theory for designing efficient diffusion models with straight-line trajectories in augmented spaces \citep{pandey2023generative}. Since our guided sampler can be used with any pretrained diffusion models, our approach is complementary to this line of work. More recently, SCG \citep{HuangGLHZSGOY24} leverages ideas from path integral control to design guidance schemes with non-differentiable constraints. In contrast, we only focus on differentiable terminal costs, and extending our framework to non-differentiable costs could be an important direction for future work. Lastly, \citet{rout2024rbmodulationtrainingfreepersonalizationdiffusion} propose RB-Modulation, a method based on control theory for personalization using diffusion models. Interestingly, while RB-Modulation is primarily inspired by a class of tractable problems in control theory, it is a special case of our framework in the limit of $w_s=0, w_c=0$ and $\gamma=1$. Therefore, our proposed framework is more flexible.

\section{Experiments}
\label{sec:experiments}
While our method serves as a general framework for guidance in diffusion models, here, we focus on solving inverse problems and style-guided generation. Through both quantitative and qualitative results, we demonstrate that our approach outperforms recent state-of-the-art baselines across these tasks using pretrained diffusion models. Lastly, we emphasize key design parameters of our proposed method as ablations. We defer all implementation details to App. \ref{app:implm}.

\textbf{Problem Setup and Terminal Costs:}
For inverse problems, given a corruption model $\gA$ and a noisy measurement $\mathbf{y} \in R^d$ the goal is to recover the unknown sample $\rvx_0 \sim p_\text{data}$, from the degradation $\mathbf{y}=\gA (\vx_0)+\sigma_{y} \rvz, \quad\rvz \sim \gN\left(\mathbf{0}, \mI_d\right)$. For linear inverse problems, $\mathbf{y}=\gA\vx_0$. In the case where only the functional form of the degradation operator $\gA$ is known but its parameters are not, the problem is known as \emph{Blind Inverse Problem}. For inverse problems, we consider the following form of the terminal cost $\Phi(\hat{\rvx}_0^t)$ in Eq. \ref{eq:ntmc_ddim},
\begin{equation}
    \Phi(\hat{\rvx}_0^t) = \Vert \vy - \gA(\hat{\rvx}_0^t) \Vert_2^2
\end{equation}
where $\hat{\rvx}_0^t$ is the Tweedies estimate at any given time t. We also consider the task of style guidance with Stable Diffusion, where the goal is to generate samples adhering to a specific prompt and a reference \emph{style} image. More specifically, given a reference style image $\vr$, a pretrained feature extractor (like CLIP) $\gF$, a pretrained decoder $\gD$ such that $\hat{\rvx}_0^t = \gD(\hat{\rvz}_0^t)$, we define $\Phi(\hat{\rvx}_0^t)$ as,
\begin{equation}
    \Phi(\hat{\rvx}_0^t) = \Vert \gG(\gF(\vr)) - \gG(\gF(\hat{\rvx}_0^t)) \Vert_F^2
    \label{eq:style_cost}
\end{equation}
where $\gG$ denotes the Gram-matrix operation and $\Vert .\Vert_F$ denotes the Frobenius norm. Note that prompt adherence is achieved via the pretrained Stable Diffusion model.

\textbf{Models and Datasets:} For inverse problems, we conduct experiments on the FFHQ $(256 \times 256)$ \citep{ffhq} and ImageNet $(256 \times 256)$ \citep{deng2009imagenet} datasets, using a held-out validation set of $1,000$ samples from each. For FFHQ, we use the pre-trained model provided by \citet{chung2022diffusion}, and for ImageNet, we use the unconditional pre-trained checkpoint from OpenAI \citep{dhariwal2021diffusion}. For style guidance, following MPGD \citep{he2024manifold}, we randomly generate 1k (prompt, image) pairs using images from WikiArt \citep{saleh2015large} and prompts from PartiPrompt \citep{yu2022scaling}. We use the pre-trained CLIP encoder and Stable Diffusion 1.4 models as the feature extractor $\gF$ and diffusion model, respectively.

\textbf{Tasks and Metrics:}
For inverse problems, we consider random inpainting, super-resolution, and non-linear deblurring. Additionally, we consider blind image deblurring (BID) task where we additionally infer the deblurring kernel $\vk$ along with the final reconstruction. We set the noise level $\sigma_y=0.01$ for all inverse problems.

For quantitative evaluation on inverse problems, we report metrics optimized for perceptual quality, including Learned Perceptual Image Patch Similarity (LPIPS) \citep{lpips}, Fréchet Inception Distance (FID) \citep{fid}, and Kernel Inception Distance (KID) \citep{kid}. For completeness, recovery metrics like the Peak Signal-to-Noise Ratio (PSNR) are provided in Appendix \ref{app:add_exp}. With the exception of BID (for which we use 100 images), we evaluate all other inverse problems on 1k images. For style guidance, following prior work \citet{Yu_2023_ICCV, he2024manifold}, we report the CLIP score (which measures prompt adherence) and the Style Score (which measures style adherence) on 1k text and image pairs.

\subsection{Inverse Problems}
We first evaluate the proposed NDTM sampler against competing baselines for non-linear, blind, and linear inverse problems. We provide all hyperparameter details for our method and competing baselines in Appendix \ref{app:implm}.

\begin{table}[t]
\centering
\caption{Comparisons on \textbf{noisy Non-linear Deblur}. NDTM outperforms competing baselines by a significant margin. \textbf{Bold}: best.}
\setlength{\tabcolsep}{4pt}
\resizebox{\columnwidth}{!}{
\begin{tabular}{@{}ccccccc@{}}
\toprule
      & \multicolumn{3}{c|}{FFHQ (256 × 256)}          & \multicolumn{3}{c}{ImageNet (256 × 256)}       \\ \midrule
    Method        & LPIPS↓ & FID $\downarrow$ & KID $\downarrow$ & LPIPS↓ & FID $\downarrow$ & KID $\downarrow$ \\ \midrule
DPS         & 0.752  & 249.01           & 0.139            & 0.888  & 346.82           & 0.2186          \\
RED-diff    & 0.362  & 64.57            & 0.036           & 0.416  & 78.07            & 0.0224          \\
MPGD    & 0.636  & 113.98            & 0.086           & 0.832  & 148.96            & 0.085          \\
RB-Modulation    & 0.064  & 19.92            & 0.0032           & 0.249  & 47.60            & 0.0078          \\ \midrule
NDTM (ours) & \textbf{0.046} & \textbf{14.198} & \textbf{0.0004}    & \textbf{0.163}      & \textbf{34.31}                & \textbf{0.0032}               \\ \bottomrule
\end{tabular}
}
\label{deblur}
\end{table}

\begin{table*}[t]
\caption{NDTM performs on-par/better than competing baselines on noisy linear inverse problems. Missing entries indicate unstable performance after multiple tuning attempts \textbf{Bold}: best.}
\small
\centering
\setlength{\tabcolsep}{4pt}
\begin{tabular}{@{}ccccccc|cccccc@{}}
\toprule
\multicolumn{7}{c|}{\textbf{Super-Resolution (4x)}}                                                                       & \multicolumn{6}{c}{\textbf{Random Inpainting (90\%)}}                                                       \\ \midrule
      & \multicolumn{3}{c|}{FFHQ (256 × 256)}                & \multicolumn{3}{c|}{Imagenet (256 × 256)}            & \multicolumn{3}{c|}{FFHQ (256 × 256)}                & \multicolumn{3}{c}{Imagenet (256 × 256)}             \\ \midrule
     Method       & LPIPS↓         & FID $\downarrow$ & KID $\downarrow$ & LPIPS↓         & FID $\downarrow$ & KID $\downarrow$ & LPIPS↓         & FID $\downarrow$ & KID $\downarrow$ & LPIPS↓         & FID $\downarrow$ & KID $\downarrow$ \\ \midrule
DPS         & 0.061          & 20.61            & 0.0029           & 0.195          & 30.67            & 0.0021           & \textbf{0.058} & 20.24            & \textbf{0.0019}  & 0.152          & 32.56            & 0.0023           \\
DDRM        & 0.116          & 36.13            & 0.0183           & 0.325          & 52.76            & 0.0151           & 0.582          & 167.57           & 0.1530           & 0.791          & 211.66           & 0.1517           \\
RED-diff    & 0.151          & 41.54            & 0.0179           & 0.354          & 51.83            & 0.0084           & 0.430          & 155.49           & 0.1370           & 0.633          & 218.88           & 0.1531           \\
MPGD    & 0.119          & 28.54            & 0.0032           & 0.215          & 37.39            & 0.0017           & 0.658          & 173.28           & 0.134           & 0.908          & 156.44           & 0.053           \\
C-$\Pi$GDM  & 0.106          & 29.61            & 0.0073           & 0.270          & 39.96            & 0.0024           & 0.551          & 137.85           & 0.1020           &    -            &       -           &            -      \\
RB-Modulation    & \textbf{0.054}          & \textbf{18.22}            & \textbf{0.0013}           & 0.211          & 35.26            & 0.0032           & 0.091          & 25.76           & 0.0026           & 0.223          & 46.30           & 0.0066           \\\midrule
NDTM (ours) & \textbf{0.054} & 18.99   & 0.0019  & \textbf{0.158} & \textbf{28.75}   & \textbf{0.0011}  & 0.059          & \textbf{20.11}   & 0.0020           & \textbf{0.149} & \textbf{30.43}   & \textbf{0.0018}  \\ \bottomrule
\end{tabular}
\label{table:linear_ip}
\end{table*}

\begin{table}[!t]
\caption{Comparisons on \textbf{noisy Blind Image Deblurring (BID)} for the FFHQ 256$\times$256 dataset. NDTM outperforms DMPlug \citep{dmplug} by a significant margin while requiring an order of magnitude less sampling time (reported in minutes/img). \textbf{Bold}: best. $\dagger$: N=15, T=200, $\zeta$: N=15, T=400.}
\small
\centering
\setlength{\tabcolsep}{4pt}
\resizebox{\columnwidth}{!}{
\begin{tabular}{@{}ccccccc@{}}
\toprule
                           & \multicolumn{3}{c|}{Gaussian blur}              & \multicolumn{3}{c}{Motion blur}                 \\ \midrule
Method                     & LPIPS↓         & FID↓           & Time↓         & LPIPS↓         & FID↓          & Time↓          \\ \midrule
DMPlug                     & 0.147          & 69.36          & 51.24         & 0.118          & 72.85         & 51.13          \\ \midrule
NDTM$^\dagger$ (Ours) & 0.103          & 55.15          & \textbf{7.17} & 0.086          & 49.99         & \textbf{7.17} \\
NDTM$^\zeta$ (Ours) & \textbf{0.083} & \textbf{47.34} & 18.07          & \textbf{0.063} & \textbf{38.6} & 18.13           \\ \bottomrule
\end{tabular}
}
\label{table:bid}
\end{table}

\textbf{Non-Linear Deblurring.}
We consider non-linear deblurring with the same setup as in \citet{chung2022diffusion}. For this task, we compare against DPS \citep{chung2022diffusion}, RED-Diff \citep{reddiff}, MPGD \citep{he2024manifold}, and RB-Modulation \citep{rout2024rbmodulationtrainingfreepersonalizationdiffusion} (which is a special case of our method with $\gamma_t=1.0$, $w_s=0$, $w_c=0$). Figure \ref{fig:main_fig} (Right) illustrates the comparison between competing baselines and our proposed method, NDTM, on this task. Qualitatively, we find that DPS is very sensitive to guidance step size and is usually unstable on this task. Moreover, while RED-diff does not have stability issues, it is biased towards generating blurry samples. This is not surprising given their unimodal approximation to the data posterior $p(\rvx_0|\rvy)$. On the contrary, NDTM generates high-fidelity reconstructions with a stable sampling process. Similarly, our quantitative results in Table~\ref{deblur} validate our qualitative findings as our method outperforms competing baselines on perceptual quality for both datasets by a significant margin.

\textbf{Blind Image Deblurring (BID).} Next, we extend our framework to blind image deblurring, maintaining the same setup as DMPlug \citep{dmplug}. We compare against DMPlug, which, to the best of our knowledge, is also the state-of-the-art method for this task. Interestingly, adapting our proposed method for blind inverse problems only involves jointly optimizing the unknown blur kernel parameters along with the control $\rvu_t$. More specifically, for degradation of the form $\vy = \vk * \rvx_0 + \sigma_y \rvz$ with unknown blurring kernel $\vk$, we update line 11 in Algorithm \ref{alg:ndtm_ddim} as $\gC_\text{terminal} = w_T\Phi(\hat{\rvx}_0^{(i)}, k^{(i)})$ and optimizing for the trainable kernel for each image, as $\vk^{(i+1)} \gets \texttt{Update}(\vk^{(i)}, \nabla_{k} \gC_t^{(i)}$).

Figure \ref{fig:nld_bid_res} (Right) illustrates the comparison between DMPlug and NDTM adapted for this task. Qualitatively, we find that while DMPlug can introduce artifacts in generating reconstructions, NDTM generates high-quality reconstructions. Table~\ref{table:bid} further validates our qualitative findings as our method outperforms DMPlug on perceptual quality metrics. More interestingly, while DMPlug is extremely expensive for a single image, our method outperforms the former on sample quality by a significant margin while being an order of magnitude faster (see Table \ref{table:bid}). This illustrates that our sampler has a more efficient way to trade sampling speed for quality. We present additional results in Appendix~\ref{app:add_exp}.

\begin{figure}[t]
    \centering
    \includegraphics[width=\linewidth]{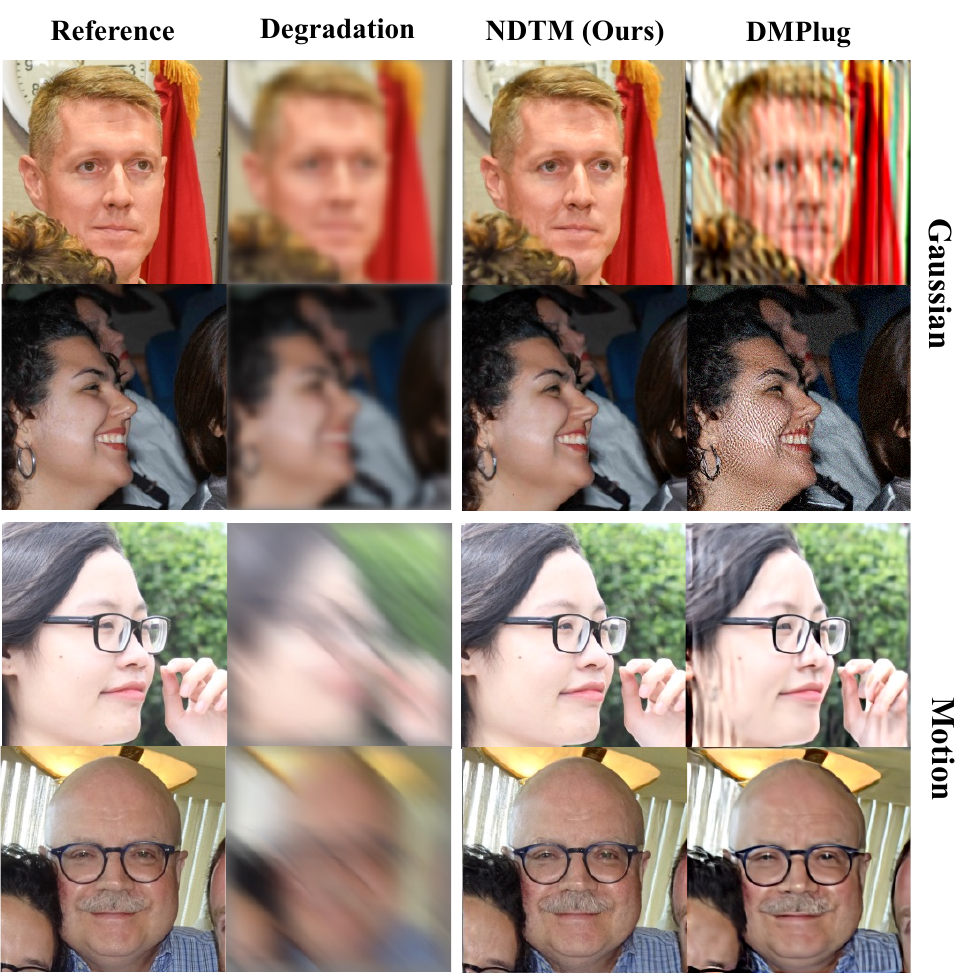}
    \caption{\textbf{NDTM outperforms competing baselines on blind image deblurring (BID) with Gaussian (top) and Motion (bottom) kernels}. NDTM accurately captures most details, while competing methods introduce artifacts in the generated reconstructions.}
    \label{fig:nld_bid_res}
\end{figure}

\begin{figure*}[!ht]
    \centering
    \includegraphics[width=\linewidth]{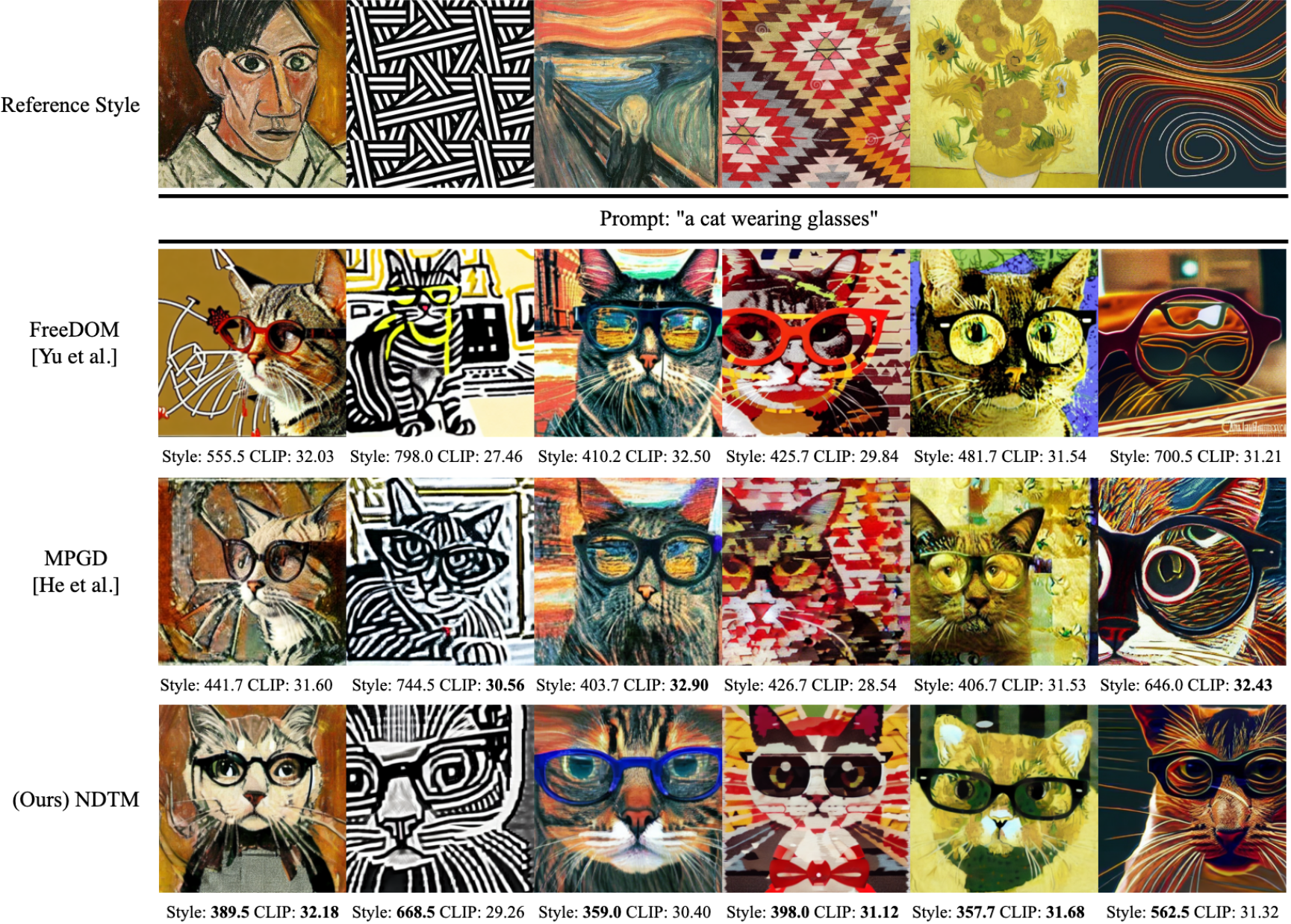}
    \caption{\textbf{Style Guidance with Stable Diffusion.} NDTM (proposed) provides a better tradeoff between Prompt adherence and Style adherence using Stable Diffusion 1.4. While baselines can introduce random artifacts in generated samples, NDTM preserves sample quality while exhibiting better style adherence. \textbf{(Top Panel)} Reference Style Images. \textbf{(Bottom Panel)} Samples corresponding to the reference style images in the Top Panel. The CLIP \citep{radford2021learningtransferablevisualmodels} score (higher is better) represents the similarity between the generated image and the text prompt, indicating prompt adherence. The Style score (lower is better) represents the distance between CLIP features for the reference style and the generated image, indicating style adherence. We present additional quantitative results in Table \ref{table:style_guidance}.}
    \label{fig:main_fig_sd}
\end{figure*}

\textbf{Linear Inverse Problems.} Lastly, we compare competing methods on linear inverse problems: (4x) Super-resolution and Random inpainting with a 90\% masking probability. In addition to the baselines used for the non-linear deblur task, we also compare against DDRM \citep{kawar2022denoising} and C-$\Pi$GDM \citep{pandey2024fast}. As illustrated in Table \ref{table:linear_ip}, for super-resolution, NDTM outperforms competing baselines for both datasets. For random inpainting, our method performs comparably with DPS on the FFHQ dataset. However, for a more difficult benchmark like ImageNet, NDTM outperforms the next best competing baseline, DPS, on this task. We present additional qualitative results for linear inverse problems in Appendix \ref{app:add_exp}

\begin{table}[t]
\centering
\small
\begin{tabular}{ccc}
\toprule
 & \textbf{CLIP Score} $\uparrow$ & \textbf{Style Score} $\downarrow$ \\
\midrule
FreeDOM & 30.86 & 508.28 \\
MPGD & 30.21 & 498.85 \\
(Ours) NDTM & \textbf{31.34} & \textbf{475.62} \\
\bottomrule
\end{tabular}
\caption{Quantitative comparison between NDTM and other baselines on style guidance generation using Stable Diffusion 1.4. NDTM exhibits better prompt (see CLIP score) and style adherence (see Style Score) over competing baselines.}
\label{table:style_guidance}
\end{table}

\begin{figure*}
    \centering
    \subfloat[Terminal Weight ($w_T$)]{\includegraphics[width=0.24\textwidth]{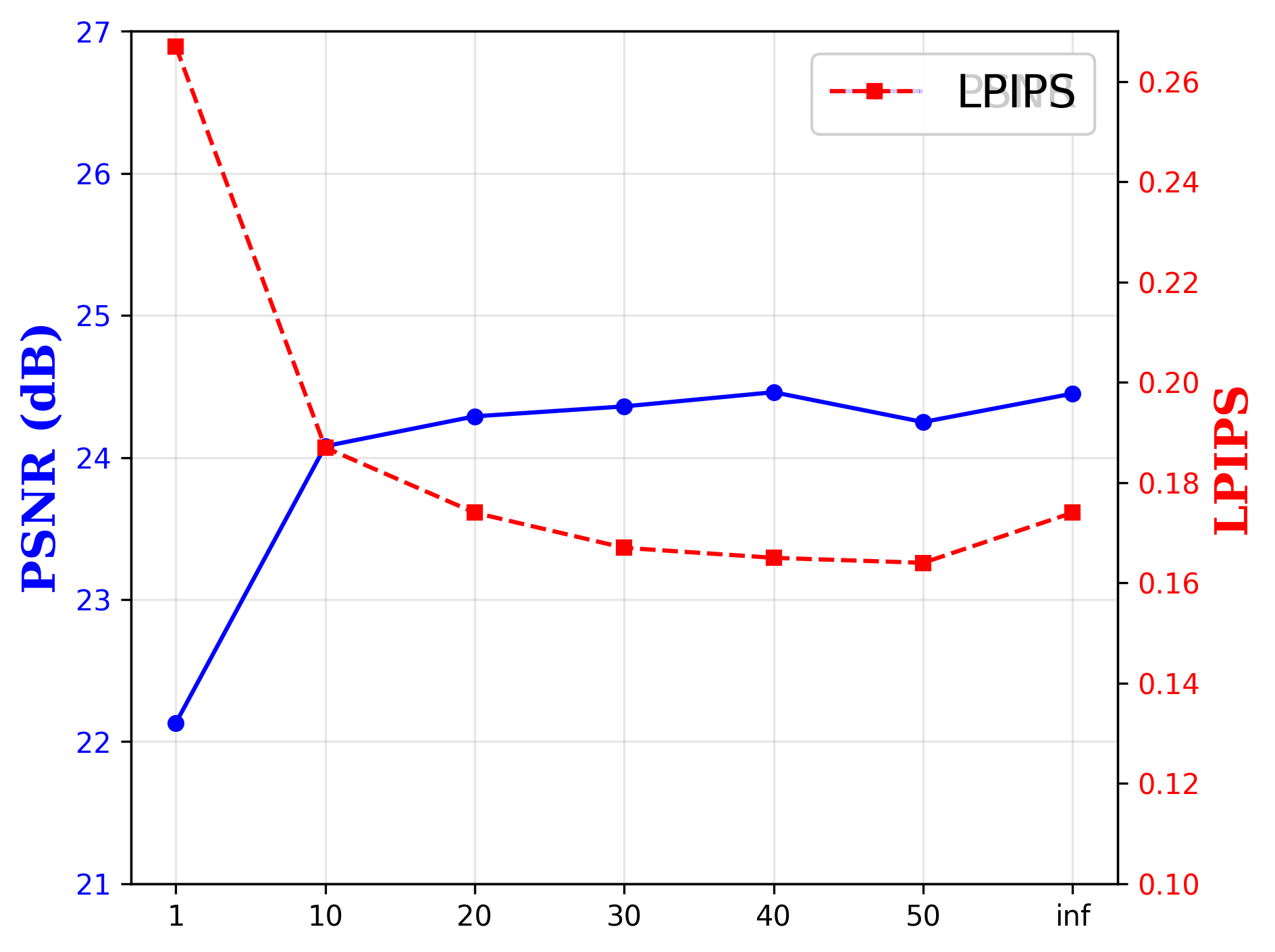}}
    \subfloat[Guidance Weight ($\gamma$)]{\includegraphics[width=0.24\textwidth]{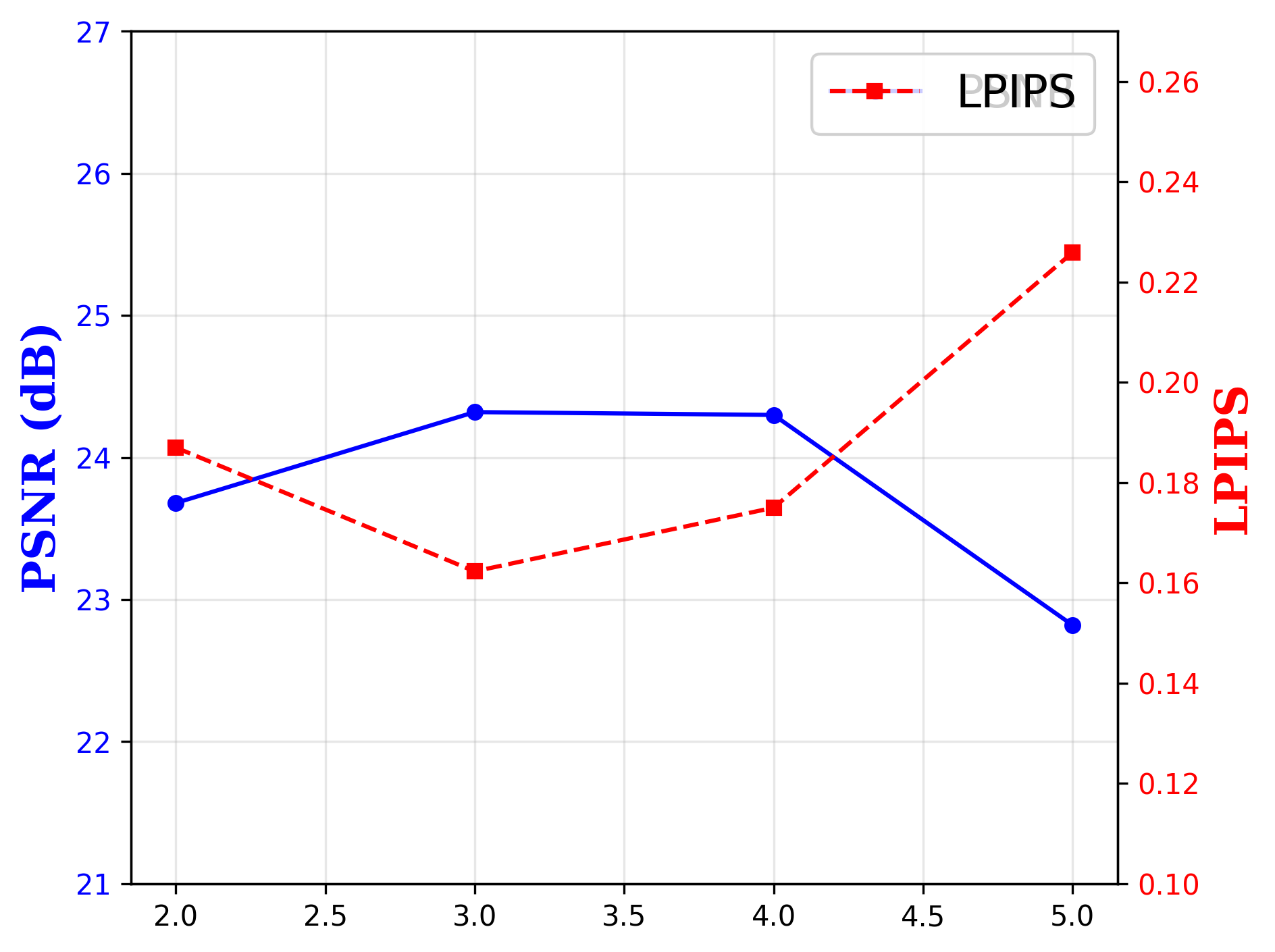}}
    \subfloat[Optimization Steps (N)]{\includegraphics[width=0.24\textwidth]{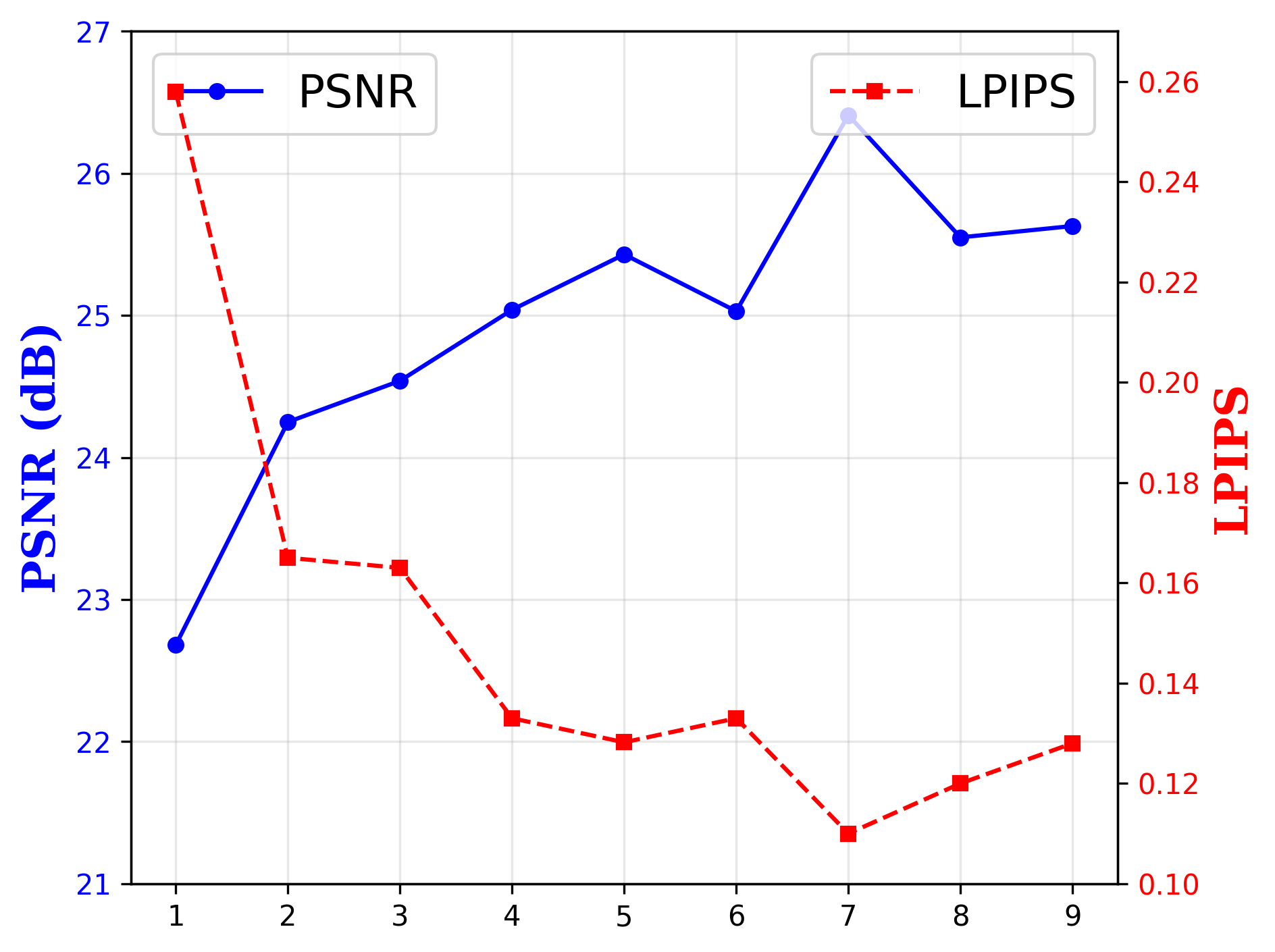}}
    \subfloat[Runtime vs N]{\includegraphics[width=0.24\textwidth]{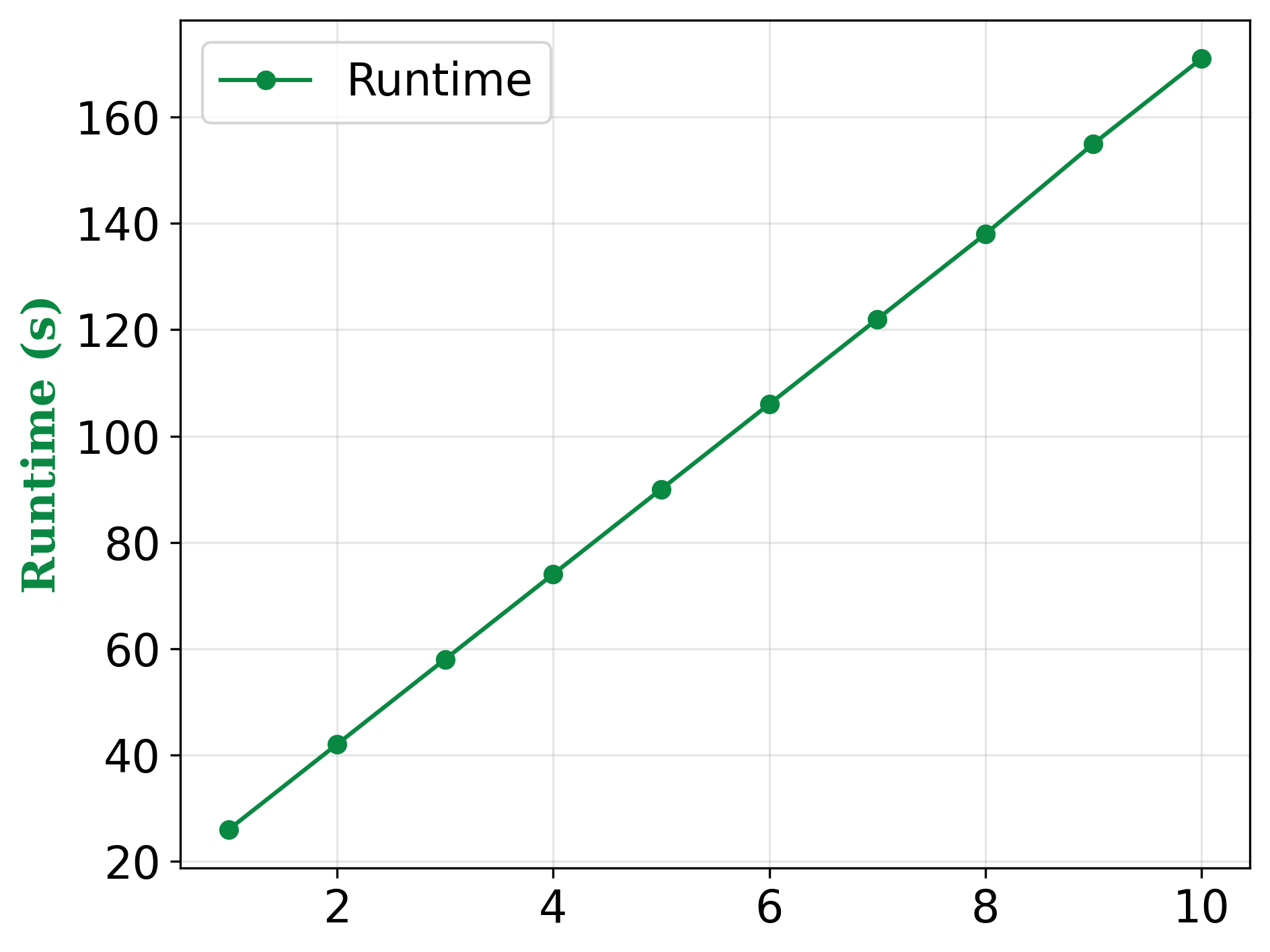}}
    \caption{\textbf{Impact of different design choices} in NDTM on Distortion (PSNR) and Perception (LPIPS) for the non-linear deblur task. (a, b) The extent of guidance can be jointly controlled by varying the terminal loss weight ($w_T$) and the weight ($\gamma$). (c, d) Compute vs quality can be traded off by jointly varying the number of optimization steps (N) and the number of diffusion steps.}
    \label{fig:ablation}
\end{figure*}

\subsection{Style Guidance} 
The goal in style guidance is to generate samples that adhere to a specified text prompt and the style of a reference image. We follow the same setup as MPGD  and use a pretrained Stable Diffusion 1.4 text-to-image model. We specify the terminal cost for NDTM in \cref{eq:style_cost} and compare with MPGD \cite{he2024manifold} and FreeDOM \citep{Yu_2023_ICCV} on this task. \Cref{fig:main_fig_sd,table:style_guidance} demonstrate that NDTM exhibits a better balance between prompt and style adherence over competing baselines.

\subsection{Ablation Studies}

Next, we analyze the impact of different design choices in NDTM on the perception (LPIPS) and distortion (PSNR) quality for the non-linear deblur task on ImageNet.

\textbf{Impact of Guidance.} Since the terminal cost weight $w_T$ and the parameter $\gamma$ affect the optimization of the variational control parameters $\rvu_t$, we analyze their impact on sample quality. From Fig. \ref{fig:ablation}a, we observe that increasing the terminal weight $w_T$ leads to an improvement in both perceptual and distortion quality. However, in the limit of $w_T \rightarrow \infty$ (i.e., where the regularization terms in Eq. \ref{eq:ntmc_ddim} can be ignored), the perceptual quality degrades, which highlights the importance of the transient cost in our framework. Similarly, increasing $\gamma$ also leads to an improved sample quality. However, a large $\gamma$ can also lead to overshooting.

\textbf{Impact of Optimization Steps.} It is common to trade sample quality for the number of sampling steps in diffusion models. Interestingly, NDTM provides a complementary axis to achieve this tradeoff in the form of adjusting the number of optimization steps per diffusion update. We illustrate this in Fig. \ref{fig:ablation}(b), where for a fixed sampling budget of 50 diffusion steps, NDTM can achieve better reconstruction quality by increasing the number of optimization steps (N). However, since the runtime increases linearly as N grows (see Figure \ref{fig:ablation}(d)), a practical choice depends on the available compute. We find that for this task, N=2 provides a favorable tradeoff between sampling time and quality and, therefore, use it for state-of-the-art comparisons on the ImageNet dataset in Table \ref{deblur}.

\begin{figure}[!t]
    \centering
    \includegraphics[scale=0.36]{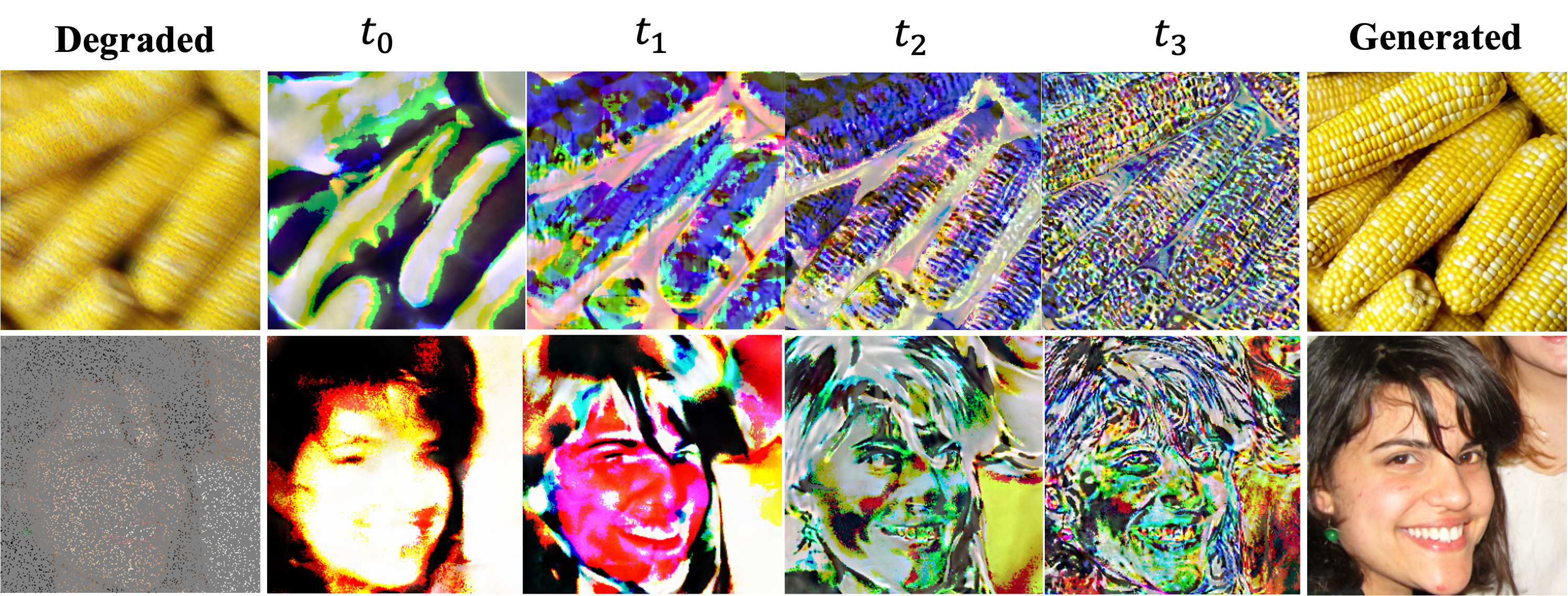}
    \caption{\textbf{The optimal variational controls hierarchically refine image features over time}. (Top Row) Non-Linear Deblur (Bottom Row) Random Inpainting. (Left to Right) We visualize optimal controls at different times $t_0 > t_1 > t_2 > t_3$ in diffusion sampling, progressively capturing coarse to fine details.}
    \label{fig:control_viz}
\end{figure}

\textbf{Control Visualizations}
We visualize the optimal controls $\rvu_t^*$ in Figure \ref{fig:control_viz}. We observe a hierarchical refinement of image features over time. More specifically, the control inference captures global structure at the start of diffusion sampling and gradually refines local details (like edges), thereby encoding high-frequency information at later steps.

\section{Conclusion}
Our proposed framework offers a principled way to guide a pretrained diffusion model while respecting an external cost through the lens of variational optimal control. Our empirical results suggest that optimizing each diffusion step allows for more flexibility in guidance compared to commonly used approximations of the diffusion posterior.

While our method adapts well to diverse tasks, there remain several interesting directions for future work. First, a more thorough theoretical investigation into the optimization dynamics of the proposed method and integration with existing methods for fast diffusion sampling \citep{pandey2024efficient} could help alleviate the sampling costs of our method. Second, our method is only one instantiation of our framework, leaving ample room for exploration in investigating novel variational parameterizations and refining cost functions which could further enhance the flexibility of our approach.

Lastly, in a broader sense, our work gives another example of the intricate connections between test-time adaptation of diffusion models and Bayesian inference; this view may enable future generative models to perform more complex inference tasks such as hierarchical modeling, and uncertainty quantification \citep{jazbec2025generativeuncertaintydiffusionmodels}.

\textbf{Broader Impact.} Despite good sample quality in a limited sampling budget, restoration can sometimes lead to artifacts in the generated sample, which can be undesirable in some domains like medical image analysis. Moreover, since our method relies on defining a terminal cost, it can be used for the synthesis of harmful content using an ill-defined cost objective.

\textbf{Acknowledgment.} We thank Justus Will for additional discussions and feedback. This project was funded through support from the Chan Zuckerberg Initiative. Additionally, Stephan Mandt acknowledges funding from the National Science Foundation (NSF) through an NSF CAREER Award IIS-2047418, IIS-2007719, the NSF LEAP Center, and the Hasso Plattner Research Center at UCI. Parts of this research was supported by the Intelligence Advanced Research Projects Activity (IARPA) via Department of Interior/ Interior Business Center (DOI/IBC) contract number 140D0423C0075. The U.S. Government is authorized to reproduce and distribute reprints for Governmental purposes
notwithstanding any copyright annotation thereon. Disclaimer: The views and conclusions
contained herein are those of the authors and should not be interpreted as necessarily
representing the official policies or endorsements, either expressed or implied, of IARPA, DOI/IBC, or the U.S. Government.

\bibliography{main}
\bibliographystyle{icml2025}

\newpage
\appendix
\onecolumn
\section{Proofs}
\label{app:ftm}

\subsection{Simplification of the NDTM Objective for DDIM}
\label{subsec:ddim_proof}
We restate the theoretical result for convenience.
\begin{proposition}
    For the diffusion posterior parameterization in \citet{song2022denoisingdiffusionimplicitmodels}, the objective in Eq. \ref{eq:ndtm_cost} can be simplified as (see proof in Appendix \ref{subsec:ddim_proof}),
    \begin{align}
        \gC \leq \kappa_t^2\big\Vert \rvu_t \big\Vert_2^2 + \tau_t^2 \big\Vert \epsilon_\theta(\bar{\rvx}_t, t) - \epsilon_\theta(\rvx_t, t) \big\Vert_2^2 + w_T\Phi(\hat{\rvx}_0^t).
    \end{align}
    where $\bar{\rvx}_t = \rvx_t + \gamma \rvu_t$ is the guided state and the coefficients $\kappa_t = \frac{\gamma \sqrt{\alpha_{t-1}}}{\sqrt{\alpha_t}}$ and $\tau_t = \sqrt{1 - \alpha_{t-1} - \sigma_t^2} - \frac{\sqrt{\alpha_{t-1}(1 - \alpha_t)}}{\sqrt{\alpha_t}}$ are time-dependent scalars.
\end{proposition}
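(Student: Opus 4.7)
The plan is to unfold the DDIM mean parameterization into its affine form in the noisy input and noise prediction, so that the guided/unguided mean difference becomes a sum of two identifiable terms, and then apply a standard quadratic inequality to arrive at the stated bound.

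First I would write down the DDIM posterior mean explicitly. Recalling that DDIM predicts $\rvx_{t-1}$ as a deterministic function of $\rvx_t$ and $\epsilon_\theta(\rvx_t,t)$ plus isotropic noise, the mean has the form
\begin{equation*}
\vmu_\theta(\rvx_t,t) \;=\; \frac{\sqrt{\alpha_{t-1}}}{\sqrt{\alpha_t}}\,\rvx_t \;+\; \left(\sqrt{1-\alpha_{t-1}-\sigma_t^2} - \frac{\sqrt{\alpha_{t-1}(1-\alpha_t)}}{\sqrt{\alpha_t}}\right)\epsilon_\theta(\rvx_t,t),
\end{equation*}
which follows by substituting the Tweedie-style estimate $\hat{\rvx}_0(\rvx_t,t)=(\rvx_t-\sqrt{1-\alpha_t}\,\epsilon_\theta(\rvx_t,t))/\sqrt{\alpha_t}$ into the DDIM sampling rule. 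I recognize the coefficient of the noise prediction as exactly $\tau_t$ from the proposition statement.

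Next I would compute the difference $\vmu_\theta(\bar{\rvx}_t,t)-\vmu_\theta(\rvx_t,t)$ with $\bar{\rvx}_t=\rvx_t+\gamma\rvu_t$. Because the mean is affine in $\rvx_t$ and linear in the noise prediction, the $\rvx_t$-part produces $\frac{\sqrt{\alpha_{t-1}}}{\sqrt{\alpha_t}}\cdot\gamma\rvu_t=\kappa_t\rvu_t$, while the noise-prediction part produces $\tau_t(\epsilon_\theta(\bar{\rvx}_t,t)-\epsilon_\theta(\rvx_t,t))$. This gives the clean identity
\begin{equation*}
\vmu_\theta(\bar{\rvx}_t,t)-\vmu_\theta(\rvx_t,t) \;=\; \kappa_t \rvu_t \;+\; \tau_t\bigl(\epsilon_\theta(\bar{\rvx}_t,t)-\epsilon_\theta(\rvx_t,t)\bigr).
\end{equation*}

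Finally, I would take squared $\ell_2$-norms on both sides and decouple the cross term via the elementary inequality $\|\va+\vb\|_2^2\le 2\|\va\|_2^2+2\|\vb\|_2^2$ (equivalently Young's inequality), absorbing the harmless constant into the overall loss weighting used downstream. Adding the unchanged terminal cost $w_T\Phi(\hat{\rvx}_0^t)$ recovers the displayed upper bound. The only real wrinkle — and the place where I would be careful in the write-up — is precisely this step: since the tightest elementary decoupling bound has a factor of $2$ in front of each squared norm, I expect the paper either absorbs that factor into the generalized weights $w_s,w_c$ introduced right after the proposition or treats the $\kappa_t^2,\tau_t^2$ as placeholders for proportional weights. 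Everything else is a routine algebraic unwrapping of the DDIM update rule.
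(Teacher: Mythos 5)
Your proof follows the paper's argument exactly: expand the DDIM mean as an affine function of $\rvx_t$ and the noise prediction, read off $\kappa_t$ and $\tau_t$ so that the guided/unguided mean difference becomes $\kappa_t\rvu_t + \tau_t(\epsilon_\theta(\bar{\rvx}_t,t)-\epsilon_\theta(\rvx_t,t))$, and then decouple the squared norm. Your worry about the factor of $2$ is well founded and is, if anything, a point where you are more careful than the paper --- the paper justifies dropping the cross term by citing ``the triangle inequality,'' which only gives $\Vert\va+\vb\Vert_2^2 \le 2\Vert\va\Vert_2^2 + 2\Vert\vb\Vert_2^2$, so the stated coefficients $\kappa_t^2,\tau_t^2$ are indeed correct only up to a constant that must be absorbed into the generalized weights $w_c, w_s$ introduced after the proposition, exactly as you anticipate.
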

\begin{proof}
    In the case of DDIM \cite{song2022denoisingdiffusionimplicitmodels}, the diffusion posterior is parameterized as (Eqn. 12 in \citet{song2022denoisingdiffusionimplicitmodels}),
\begin{equation}
\vmu_\vtheta(\rvx_t, t) = \frac{\sqrt{\alpha_{t-1}}}{\sqrt{\alpha_t}}\rvx_t + \underbrace{\Big[\sqrt{1 - \alpha_{t-1} - \sigma_t^2} - \frac{\sqrt{\alpha_{t-1}(1 - \alpha_t)}}{\sqrt{\alpha_t}}\Big]}_{=\tau_t} \epsilon_\vtheta(\rvx_t, t).
\label{eq:ddim_post}
\end{equation}
where the diffusion noising process is parameterized as $p(\rvx_t|\rvx_0) = \gN(\sqrt{\alpha_t} \rvx_0, (1 - \alpha_t)\mI_d)$ and $\epsilon_\vtheta(\rvx_t, t)$ is a pretrained denoiser which models $\E[\epsilon|\rvx_t]$ and intuitively predicts the amount of noise added to $\rvx_0$ for a given noisy state $\rvx_t$ at time t. Additionally, for notational convenience, we denote the coefficient of the denoiser in Eq. \ref{eq:ddim_post} as $\tau_t$. Following \citet{song2022denoisingdiffusionimplicitmodels}, the coefficient $\sigma$ is further defined as,
\begin{equation}
\sigma_t = \sqrt{\frac{(1 - \alpha_{t-1})}{(1 - \alpha_t)}\Big(1 - \frac{\alpha_t}{\alpha_{t-1}}\Big)}
\end{equation}
It follows that,
\begin{align}
    \vmu_\vtheta(\rvx_t, t) &= \frac{\sqrt{\alpha_{t-1}}}{\sqrt{\alpha_t}}\rvx_t + \tau_t \epsilon_\vtheta(\rvx_t, t)\\
    \vmu_\vtheta(\rvx_t + \gamma\rvu_t, t) &= \frac{\sqrt{\alpha_{t-1}}}{\sqrt{\alpha_t}}(\rvx_t + \gamma\rvu_t) + \tau_t \epsilon_\vtheta(\rvx_t + \gamma\rvu_t, t) \\
    &= \frac{\sqrt{\alpha_{t-1}}}{\sqrt{\alpha_t}}\rvx_t + \underbrace{\frac{\gamma\sqrt{\alpha_{t-1}}}{\sqrt{\alpha_t}}}_{=\kappa_t}\rvu_t + \tau_t \epsilon_\vtheta(\rvx_t + \gamma\rvu_t, t)
\end{align}
where we denote the coefficient of the control signal $\rvu_t$ in the above equation as $\kappa_t$ for notational convenience. Consequently, the NDTM cost in Eq. \ref{eq:ndtm_cost} can be simplified for the DDIM posterior parameterization in Eq. \ref{eq:ddim_post} as,
\begin{align}
    \gC &= \big[\big\Vert \vmu_\vtheta(\rvx_t + \gamma\rvu_t, t) - \vmu_\vtheta(\rvx_t, t) \big\Vert_2^2 + w_T \Phi(\hat{\rvx}_0^t)\big] \\
    &=  \big[\big\Vert \kappa_t \rvu_t + \tau_t (\epsilon_\vtheta(\rvx_t + \gamma\rvu_t, t) - \epsilon_\vtheta(\rvx_t, t)) \big\Vert_2^2 + w_T \Phi(\hat{\rvx}_0^t)\big] \\
    &\leq^{(i)} \kappa_t^2\big\Vert \rvu_t \big\Vert_2^2 + \tau_t^2 \big\Vert\epsilon_\vtheta(\rvx_t + \gamma\rvu_t, t) - \epsilon_\vtheta(\rvx_t, t) \big\Vert_2^2 + w_T \Phi(\hat{\rvx}_0^t)
\end{align}
where $(i)$ follows from the triangle inequality. This completes the proof.
\end{proof}

\subsection{Continuous-Time Diffusion Trajectory Matching}
\label{subsec:cont_diff}

Analogous to the discrete case, we represent unguided diffusion dynamics using the following continuous-time reverse diffusion dynamics \citep{ANDERSON1982313, songscore},
\begin{equation}
d\rvx_t = \Big[f(t)\rvx_t - g(t)^2 \vs_\vtheta(\rvx_t, t)\Big]dt + g(t)d\vw_t,
\label{eq:unguided_cont_dyn}
\end{equation}
where $\vs_\vtheta(\rvx_t, t)$ is a pretrained score network.
Similarly, we parameterize the \emph{guided} continuous dynamics by inserting the control non-linearly into the score function follows,
\begin{equation}
d\rvx_t = \Big[f(t)\rvx_t - g(t)^2 \vs_\vtheta(\rvx_t, \rvu_t, t)\Big]dt + g(t)d\vw_t.
\end{equation}
Denote the unguided path measure as $\vu(\rvx(T \rightarrow 0))$ and the guided path measure as $\mu(\rvx(T \rightarrow 0)|\rvu(T \rightarrow 0))$.

Then, the optimal control problem reads, in analogy to \cref{eq:oc_cost}:
\begin{equation}
    \gC(\rvx_T, \rvu(T \rightarrow 0)) = w_T \underbrace{\E_{\mu}[\Phi(\rvx_0)]}_\text{Terminal Cost $\gC_\text{te}$} + \underbrace{\kl{\mu(x(T \rightarrow 0)|\rvx_T, \rvu(T \rightarrow 0))}{\nu(x(T \rightarrow 0)|\rvx_T)}}_\text{Transient Cost $\gC_\text{tr}$}.
    \label{eq:cont_dtm_loss}
\end{equation}
By \citep[Theorem 1 in Appendix A]{song2021maximum} (which follows from an application of Girsanov's Theorem), the transient cost reads:
\begin{align}
    \gC_\text{tr} &= \kl{\mu(x(T \rightarrow 0)|\rvx_T, \rvu(T \rightarrow 0))}{\nu(x(T \rightarrow 0)|\rvx_T)} \\&
    = \frac12 \int g(t)^2 \E_{\mu} \| \vs_\vtheta(\rvx_t, \rvu_t, t) - \vs_\vtheta(\rvx_t, t) \|^2 dt.
\end{align}
Taking the approximation that the control signal is optimized greedily, we find \cref{eq:ctdtm}.

\subsection{Extension to Flow Matching Models}
\label{subsec:flow_match}

For continuous flow matching models \citep{lipman2023flow, albergo2023building, liu2023flow} with a vector field $\vv_\vtheta(\rvx_t, t)$, 
\begin{equation}
    d\rvx_t = \vv_\vtheta(\rvx_t, t) dt,
\end{equation}
we insert the control signal into the dynamics through an additional dependence of the velocity field:
\begin{equation}
    \frac{d\rvx_t}{dt} = \vv_\vtheta(\rvx_t, \rvu_t, t).
\end{equation}
Since flow matching uses the squared loss, it is natural to regularize deviation from the unguided trajectory in terms of the velocity field:
\begin{equation}
    \gC_\text{tr} = \int \| \vv_\vtheta(\rvx_t, \rvu_t, t) - \vv_\vtheta(\rvx_t, t) \|^2 dt
    \label{eq:flow_control_cost}
\end{equation}

\section{Implementation Details}
\label{app:implm}
In this section, we include practical implementation details for the results presented in Section \ref{sec:experiments}.

\subsection{Task Details}
\subsubsection{Inverse Problems}
Here, we describe the task setup in more detail.

\textbf{Superresolution (x4)}: We follow the setup from DPS \citep{chung2022diffusion}, More specifically,
\begin{align}
    \vy \sim \gN(\vy| \mL^{f}\rvx, \sigma_y^2 \mI),\qquad
\end{align}
where $\mS^{f}$ represents the bicubic downsampling matrix with downsampling factor $f$. In this work, we fix $f$ to 4 for both datasets.

\textbf{Random Inpainting (90\%)}
We use random inpainting with a dropout probability of 0.9 (or 90\%). For this task, the forward model can be specified as,
\begin{align}
    \vy \sim \gN(\vy| \mM\vx, \sigma_y^2 \mI_d)
\end{align}
where $\mM \in \{0, 1\}^{d \times d}$ is the masking matrix.

\textbf{Non-Linear Deblurring} We use the non-linear deblurring setup from DPS. More specifically, we use the forward operator $\gF_\phi$ (modeled using a neural network) for the non-linear deblurring operation. Given pairs of blurred and sharp images, $\{\vx_i, \vy_i\}$, one can train a forward model estimator as \citep{Tran_2021_CVPR},
\begin{equation}
    \phi^* = \argmin_\phi \Vert \vy_i - \gF_\phi(\vx_i, \gG_\phi(\rvx_i, \rvy_i)) \Vert_2^2
\end{equation}
where $\gG$ extracts the kernel information from the training pairs. At inference, the operator $\gG$ can instead be replaced by a Gaussian random vector $\vg$. In this case, the inverse problem reduces to recovering $\vx_i$ from $\vy_i$. In this work, we directly adopt the default settings from DPS.

\textbf{Blind Image Deblurring (BID)} We directly adopt the setup for blind image deblurring from DMPlug (see Appendix C.4 in \citet{dmplug} for more details). More specifically, in the BID task, the goal is to recover the kernel $\vk$ in addition to the original signal $\rvx_0$ such that,
\begin{equation}
    \vy = \vk * \vx_0 + \sigma_y\rvz
\end{equation}
In this work, we adapt the default settings from DMPlug. For BID (Gaussian), the kernel size is 64 × 64 with the standard deviation set to 3.0. For BID (Motion), the kernel intensity is adjusted to 0.5.

\subsubsection{Style Guidance with Stable Diffusion}
In the context of text-to-image models like Stable Diffusion, the goal of style guidance is to generate a sample that simultaneously adheres well to a given text prompt and style features from a reference image. More specifically, given a reference style image $\vr$, a pretrained feature extractor (like CLIP) $\gF$, a pretrained decoder $\gD$ such that $\hat{\rvx}_0^t = \gD(\hat{\rvz}_0^t)$, we define $\Phi(\hat{\rvx}_0^t)$ as,
\begin{equation}
    \Phi(\hat{\rvx}_0^t) = \Vert \gG(\gF(\vr)) - \gG(\gF(\hat{\rvx}_0^t)) \Vert_F^2
\end{equation}
where $\gG$ denotes the Gram-matrix operation and $\Vert .\Vert_F$ denotes the Frobenius norm. In this formulation, the pretrained text-to-image diffusion model works as a generative prior, $p(\rvx|\rvt)$, where $\rvt$ is a text prompt embedding, and the goal is to generate samples from the posterior $p(\rvx|\rvr, \rvt)$

\subsection{Inverse Problems - Task Specific Hyperparameters}
Here, we provide a detailed overview of different hyperparameters for the baselines considered in this work for the inverse problem tasks. We optimize all baselines and our method for the best sample perceptual quality. We use the official code implementation for RED-Diff \citep{reddiff} at \texttt{https://github.com/NVlabs/RED-diff}, \texttt{https://github.com/mandt-lab/c-pigdm}, and \texttt{https://github.com/sun-umn/DMPlug} for running all competing baselines.

\subsubsection{DPS \citep{chung2022diffusion}}
We adopt the DPS parameters from \citet{reddiff}. More specifically, we fix the number of diffusion steps to 1000 using the DDIM sampler. We set $\eta=0.5$ for all tasks. Following \citet{chung2022diffusion}, we set,
\begin{equation}
    \zeta = \frac{\alpha}{\Vert\vy - \gA(\hat{\rvx}_0)\Vert_2^2}
\end{equation}
Table \ref{table:dps_hparams} illustrates different hyperparameters for DPS on all tasks for the FFHQ and ImageNet datasets.

\subsubsection{DDRM \citep{kawar2022denoising}}
Following \citet{kawar2022denoising}, we fix $\eta = 0.85$, $\eta_b = 1.0$, and use the DDIM sampler with the number of diffusion steps set to 20 across all linear inverse problems.

\subsubsection{C-$\Pi$GDM \citep{pandey2024fast}} 
We set the number of diffusion steps to 20 for all tasks. It is also common to contract the reverse diffusion sampling for better sample quality by initializing the noisy state as proposed in \citet{chung2022comecloserdiffusefaster}. We denote the start time as $\tau$. We re-tune C-$\Pi$GDM for the best perceptual quality for all linear inverse problems. Table \ref{table:cpigdm_hparams} illustrates different hyperparameters for linear inverse problems. We find that C-$\Pi$GDM fails to recover plausible images for the random inpainting task after numerous tuning attempts.

\begin{table}[]
\centering
\begin{minipage}{0.45\textwidth}
    \caption{DPS hyperparameters for different tasks}
    \centering
    \begin{tabular}{@{}ccc@{}}
    \toprule
                             & FFHQ      & ImageNet  \\ \midrule
    Task                     & $\alpha$  & $\alpha$  \\ \midrule
    Super-Resolution (4x)    &   1.0        & 1.0      \\
    Random Inpainting (90\%) & 1.0        & 1.0        \\
    Non-Linear Deblur        & 0.3 & 1.0 \\ \bottomrule
    \end{tabular}
    \label{table:dps_hparams}
\end{minipage}%
\hfill
\begin{minipage}{0.52\textwidth}
    \centering
    \caption{C-$\Pi$GDM hyperparameters used for different tasks. We find that C-$\Pi$GDM fails to recover plausible images for the random inpainting task after numerous tuning attempts.}
    \begin{tabular}{@{}ccccccc@{}}
    \toprule
                             & \multicolumn{3}{c}{FFHQ} & \multicolumn{3}{c}{ImageNet} \\ \midrule
    Task                     & $\lambda$ & $w$ & $\tau$ & $\alpha$   & $w$   & $\tau$  \\ \midrule
    Super-Resolution (4x)    & -0.4      & 4.0 & 0.4    & -0.4       & 4.0   & 0.4     \\ \midrule
    Random Inpainting (90\%) & -         & -   & -      & -          & -     & -       \\ \bottomrule
    \end{tabular}
    \label{table:cpigdm_hparams}
\end{minipage}
\end{table}

\begin{table}[t]
\caption{RED-Diff hyperparameters used for different tasks.}
\centering
\begin{tabular}{@{}ccccc@{}}
\toprule
                         & \multicolumn{2}{c}{FFHQ} & \multicolumn{2}{c}{ImageNet} \\ \midrule
Task                     & lr       & $\lambda$     & lr         & $\lambda$       \\ \midrule
Super-Resolution (4x)    & 0.5      & 1.0           & 0.5        &   0.4              \\
Random Inpainting (90\%) & 0.5      & 0.25          & 0.5        & 0.25            \\
Non-Linear Deblur        & 0.5      & 0.25          & 0.5        & 0.25            \\ \bottomrule
\end{tabular}
    \label{table:reddiff_hparams}
\end{table}

\begin{table}[t]
\caption{BID hyperparameters for NDTM.}
\centering
\small
\begin{tabular}{cccccccc}
\hline
\multicolumn{1}{c|}{}     & \multicolumn{7}{c}{FFHQ}                                                                              \\ \midrule
\multicolumn{1}{c|}{Task} & N  & $\gamma$ & $\eta$ & $\tau$ & $w_T$ & $w_\text{s}$ & \multicolumn{1}{c}{$w_\text{c}$} \\ \midrule
BID (Gaussian)            & 15 & 1.0        & 0.7    & 1000   & 50    & ddim             & ddim                                    \\ \midrule
BID (Motion)              & 15 & 1.0        & 0.7    & 1000   & 50    & ddim             & ddim                 \\\bottomrule                  
\end{tabular}
\label{table:bid_hparams}
\end{table}

\subsubsection{RED-Diff \citep{reddiff}}
We set $\sigma_0=0$ with a linear weighting schedule and $lr=0.5$, $\lambda=0.25$, and perform 50 diffusion steps using the DDIM sampler for all tasks across the FFHQ and ImageNet datasets. We highlight different hyperparameters in Table \ref{table:reddiff_hparams}.

\begin{table}[t]
\caption{NDTM hyperparameters for different tasks.}
\centering
\small
\begin{tabular}{@{}c|ccccccc|ccccccc@{}}
\toprule
                         & \multicolumn{7}{c|}{FFHQ}                                                        & \multicolumn{7}{c}{ImageNet}                                                     \\ \midrule
Task                     & N & $\gamma$ & $\eta$ & $\tau$ & $w_T$ & $w_\text{s}$ & $w_\text{c}$ & N & $\gamma$ & $\eta$ & $\tau$ & $w_T$ & $w_\text{s}$ & $w_\text{c}$ \\ \midrule
Super-Resolution (4x)    & 5 & 1.0        & 0.7    & 400    & 50    & ddim             & ddim               & 2 & 2.0        & 0.1    & 600    & 50    & ddim             & ddim               \\
Random Inpainting (90\%) & 2 & 4.0        & 0.2    & 500    & 1     & 0                & 0                  & 2 & 4.0        & 0.0    & 600    & 50    & ddim             & ddim               \\
Non-Linear Deblur        & 5 & 5.0        & 0.1    & 400    & 1     & 0                & 0                  & 2 & 4.0        & 0.1    & 600    & 50    & ddim             & ddim               \\ \bottomrule
\end{tabular}
\label{table:ndtm_hparams}
\end{table}

\subsubsection{MPGD \citep{he2024manifold}}
Following the optimal settings in \citet{he2024manifold}, we use the setting MPGD w/o proj using the DDIM sampler with 100 diffusion steps and guidance scale set to 5.0 for all tasks.

\subsubsection{NDTM (Ours)}
\label{app:ndtm_config}
We use the Adam optimizer \citep{kingma2017adammethodstochasticoptimization} with default hyperparameters, fixing the learning rate to 0.01 for updating the control $\rvu_t$ across all tasks and fixing the kernel learning rate in the BID task to 0.01. We refer to the loss weighting scheme in Eq. \ref{eq:ndtm_cost} as "DDIM weighting".  Moreover, we use linear decay for the learning rate. We perform 50 diffusion steps using the DDIM sampler across all datasets and tasks. We tune the guidance weight $\gamma$, the number of optimization steps N, loss weighting ($w_T$, $w_\text{s}$, $w_\text{c}$), DDIM $\eta$ and the truncation time $\tau$ \citep{chung2022comecloserdiffusefaster} for best performance across different tasks. All these hyperparameters are listed in Table \ref{table:ndtm_hparams}.

\subsubsection{RB-Modulation \citep{rout2024rbmodulationtrainingfreepersonalizationdiffusion}}
Since RB-Modulation is a special case of NDTM with $\gamma=1.0$ and $w_s = w_c = 0$, we re-run NDTM for different tasks with these settings, keeping all other hyperparameters fixed to report results for RB-Modulation.

\subsection{Style Guidance Hyperparameters}
\subsubsection{MPGD \citep{he2024manifold}}
Following \citet{he2024manifold}, we use the DDIM sampler with 100 steps and $\eta=1.0$ without the time reversal \citep{Lugmayr_2022_CVPR} trick for fair comparisons. We set $\rho=17.5$ and the classifier-free guidance scale to 7.5

\subsubsection{FreeDOM \citep{Yu_2023_ICCV}}
Following \citet{Yu_2023_ICCV}, we use the DDIM sampler with 100 steps and $\eta=1.0$ without the time reversal trick. We use the classifier-free guidance scale of 5.0

\subsubsection{NDTM}
For NDTM, we use the DDIM sampler with 50 steps and $\eta=1.0$. We set the control learning rate to 0.002 with the Adam optimizer. The loss weightings are set to $w_T=1.0, w_c=0, w_s=0$ with $\gamma=4.0, N=2$ and a classifier-free guidance scale of 5.0

\section{Additional Experimental Results}
\label{app:add_exp}

 \subsection{Evaluation on Distortion Metrics}
 In this work, we primarily optimize all competing methods for perceptual quality. However, for completeness, we compare the performance of our proposed method with other baselines on recovery metrics like PSNR and SSIM. Tables \ref{table:recovery_linear_ip} and \ref{table:recovery_nl_deblur} compare our proposed method, NDTM, with competing baselines for linear and non-linear inverse problems. We find that NDTM performs on par with other methods for the super-resolution task. However, for random inpainting and non-linear deblur, NDTM outperforms competing methods in terms of distortion metrics like PSNR. Since NDTM also outperforms existing baselines in terms of perceptual quality (see Table \ref{table:linear_ip}), our method provides a better distortion-perception tradeoff. 

\subsection{Runtime}
Below, we compare different methods in terms of the wall-clock time required for running on a single image for the superresolution task. From Table \ref{tab:runtime}, we observe that while our method requires an inner optimization loop, it is still faster than common baselines like DPS and DMPlug (see Table \ref{table:bid_runtime}).

\begin{table}[ht]
\caption{Runtime comparisons for different baselines vs NDTM for super-resolution task on both datasets. The runtime numbers are in wall-clock time (seconds) and tested on a single RTX A6000 GPU.}
\centering
\footnotesize
\setlength{\tabcolsep}{3pt}
\begin{tabular}{@{}cccccc|ccccc@{}}
\toprule
\multicolumn{1}{l}{}                     & \multicolumn{5}{c|}{FFHQ ($256 \times 256$)}       & \multicolumn{5}{c}{Imagenet ($256 \times 256$)}    \\ \midrule
                                         & DPS   & RED-diff & C-$\Pi$GDM & DDRM & NDTM (Ours) & DPS   & RED-diff & C-$\Pi$GDM & DDRM & NDTM (Ours) \\ \midrule
\multicolumn{1}{c}{Runtime (secs / Img)} & 199.1 & 5.8      & 3.68       & 1.3  & 13.6        & 399.3 & 7.1      & 16.4       & 2.4  & 38.3        \\ \bottomrule
\end{tabular}
\label{tab:runtime}
\end{table}

\begin{table}[t]
\caption{Runtime comparisons for DMPlug baseline vs NDTM for blind image deblurring (BID) task on FFHQ dataset. The runtime numbers are in wall-clock time (minutes) per image and tested on a single RTX A6000 GPU.}
\label{table:bid_runtime}
\small
\centering
\begin{tabular}{@{}ccc@{}}
\toprule
Method                     & \multicolumn{1}{c|}{Gaussian blur} & Motion blur \\ \midrule
                           & Time↓                              & Time↓       \\ \midrule
DMPlug                     & 51.24                              & 51.13       \\ \midrule
NDTM$^\dagger$ (Ours) & \textbf{7.17}                       & \textbf{7.17} \\ 
NDTM$^\zeta$ (Ours) & 18.07                               & 18.13       \\ \bottomrule
\end{tabular}
\end{table}

\begin{table}[ht]
\caption{Comparison between NDTM and existing methods for Linear IPs on distortion metrics like PSNR and SSIM. Missing entries indicate that the method was unstable for that specific task. \textbf{Bold}: best.}
\small
\centering
\begin{tabular}{@{}c|cc|cc|cc|cc@{}}
\toprule
\multicolumn{1}{c|}{} & \multicolumn{4}{c|}{\textbf{Super-Resolution (4x)}} & \multicolumn{4}{c}{\textbf{Random Inpainting (90\%)}} \\ \midrule
 & \multicolumn{2}{c|}{FFHQ (256 × 256)} & \multicolumn{2}{c|}{Imagenet (256 × 256)} & \multicolumn{2}{c|}{FFHQ (256 × 256)} & \multicolumn{2}{c}{Imagenet (256 × 256)} \\ \midrule
Method & PSNR↑ & SSIM↑ & PSNR↑ & SSIM↑ & PSNR↑ & SSIM↑ & PSNR↑ & SSIM↑ \\ \midrule
DPS & 29.06 & 0.832 & 23.61 & 0.676 & 27.76 & 0.832 & 20.96 & 0.657 \\
DDRM & \textbf{30.12} & \textbf{0.864} & \textbf{24.15} & \textbf{0.701} & 17.34 & 0.371 & 15.91 & 0.257 \\
RED-diff & 27.67 & 0.720 & 24.06 & 0.685 & 20.84 & 0.581 & 18.63 & 0.466 \\
C-$\Pi$GDM & 27.93 & 0.773 & 23.20 & 0.631 & - & - & - & - \\ 
MPGD & 26.07 & 0.715 & 21.83 & 0.587  & 11.34 & 0.076 & 10.26 & 0.025 \\ 
RB-Modulation & 29.12 & 0.831 & 23.41 & 0.674  & 26.90 & 0.810 & 21.31 & 0.632 \\ \midrule
NDTM (ours) & 29.06 & 0.833 & 23.12 & 0.674 & \textbf{28.03} & \textbf{0.834} & \textbf{21.34} & \textbf{0.665} \\ \bottomrule
\end{tabular}
\label{table:recovery_linear_ip}
\end{table}

\begin{table}[!ht]
\caption{NDTM outperforms existing methods for Non-linear deblur on distortion metrics like PSNR and SSIM. \textbf{Bold}: best.}
\centering
\small
\begin{tabular}{@{}c|cc|cc@{}}
\toprule
      & \multicolumn{2}{c|}{FFHQ (256 × 256)}  & \multicolumn{2}{c}{ImageNet (256 × 256)}  \\ \midrule
Method & PSNR↑ & SSIM↑ & PSNR↑ & SSIM↑ \\ \midrule
DPS         & 8.12 & 0.262 & 6.67 & 0.156  \\
RED-diff    & 24.88 & 0.717 & 21.88 & 0.623  \\
MPGD    & 18.24 & 0.406 & 17.02 & 0.261  \\
RB-Modulation    & 29.39 & 0.846 & 22.11 & 0.612  \\ \midrule
NDTM (ours) & \textbf{30.64} & \textbf{0.874} & \textbf{24.41} & \textbf{0.732}  \\ \bottomrule
\end{tabular}
\label{table:recovery_nl_deblur}
\end{table}

\clearpage

\begin{figure}
    \centering
    \includegraphics[width=.95\linewidth]{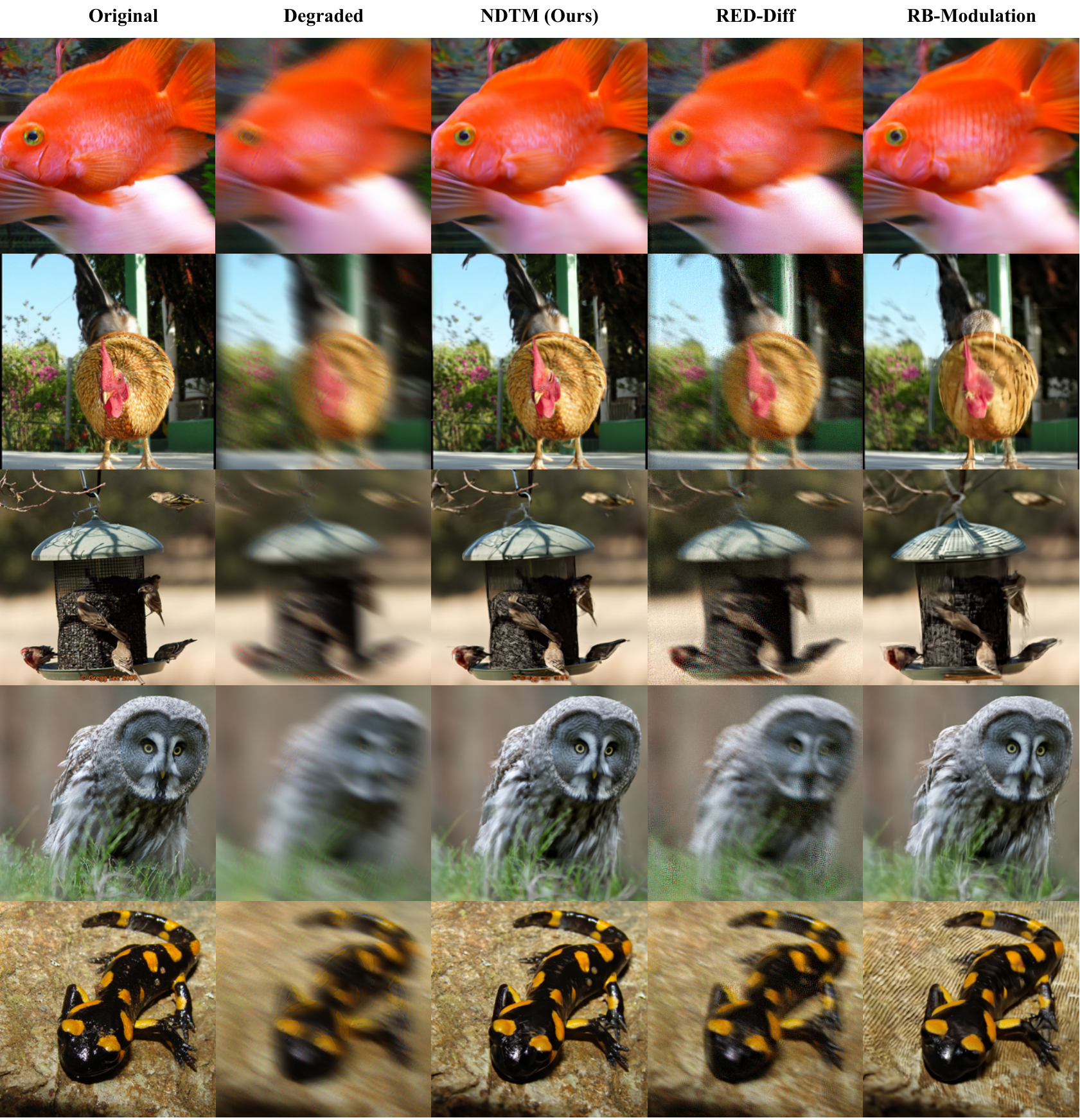}
    \caption{Qualitative comparison between NDTM and top competing baselines (see Table \ref{deblur}) on the Non-Linear Deblurring task for ImageNet. NDTM better recovers the structure of the image compared to other baselines. Best viewed when zoomed in.}
    \label{fig:add_nld}
\end{figure}
\begin{figure}
    \centering
    \includegraphics[width=1.0\linewidth]{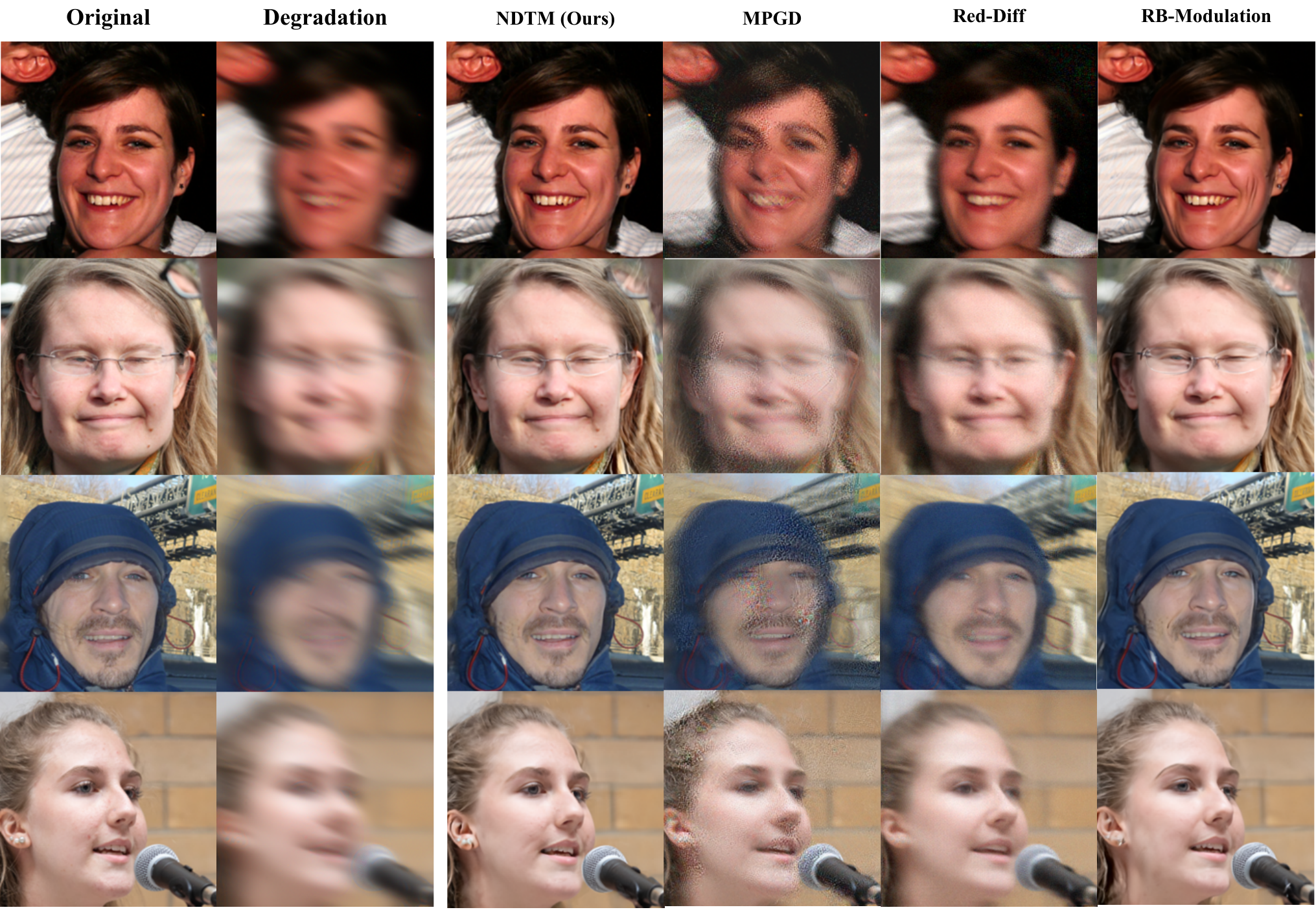}
    \caption{Qualitative comparison between NDTM and top competing baselines (see Table \ref{deblur}) on the Non-Linear Deblurring task for FFHQ. NDTM better recovers the structure of the image compared to other baselines. Best viewed when zoomed in.}
    \label{fig:add_nld_ffhq}
\end{figure}

\begin{figure}
    \centering
    \includegraphics[width=\linewidth]{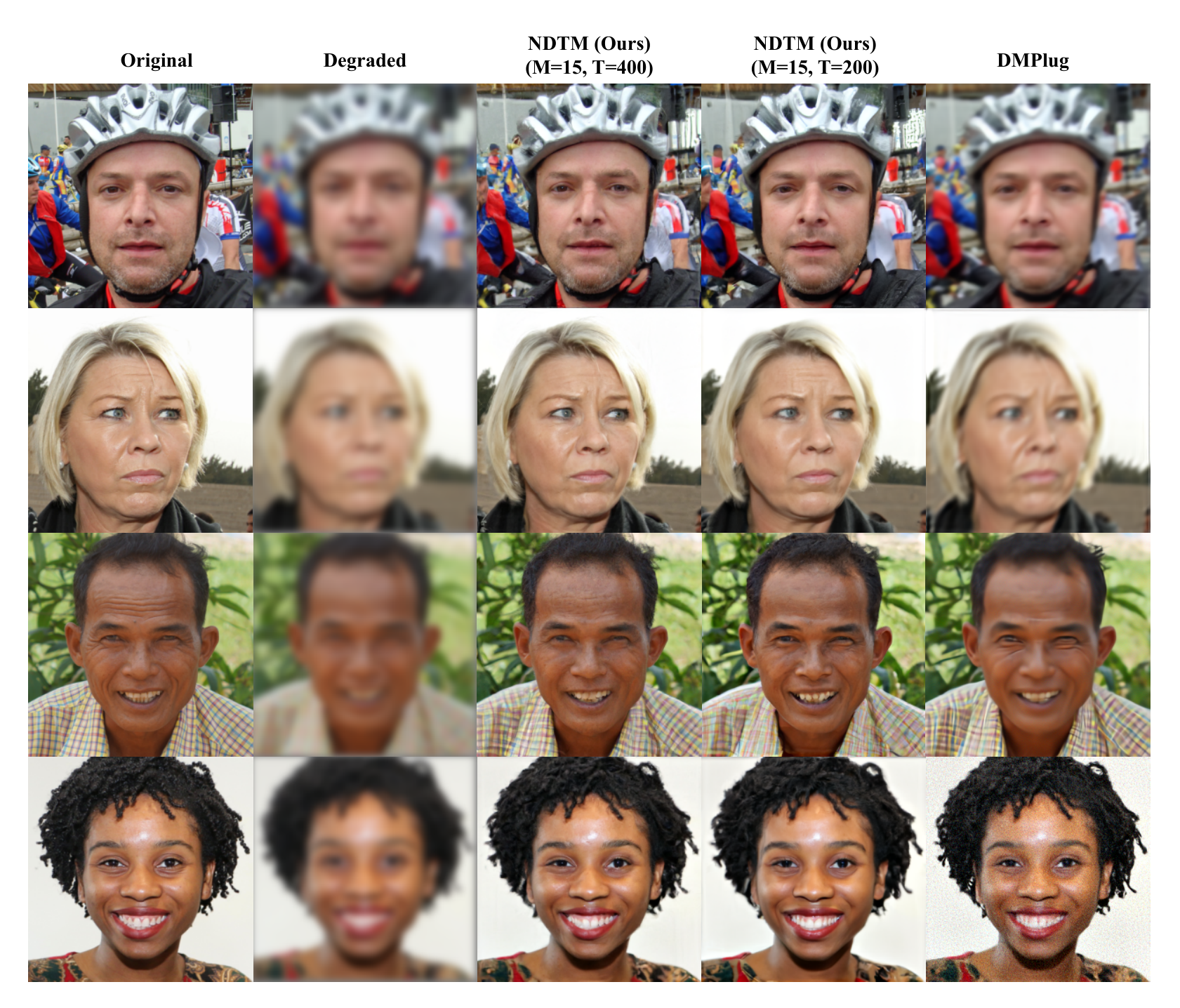}
    \caption{Qualitative comparison between NDTM and competing baseline (DMPlug) on the blind image deblurring task (see Table \ref{table:bid}). NDTM better recovers the details and structure of the image compared to the baseline. We find DMPlug introduces noisy artifacts and blurry images in some samples. Best viewed when zoomed in.}
    \label{fig:add_bid}
\end{figure}

\begin{figure}
    \centering
    \includegraphics[width=\linewidth]{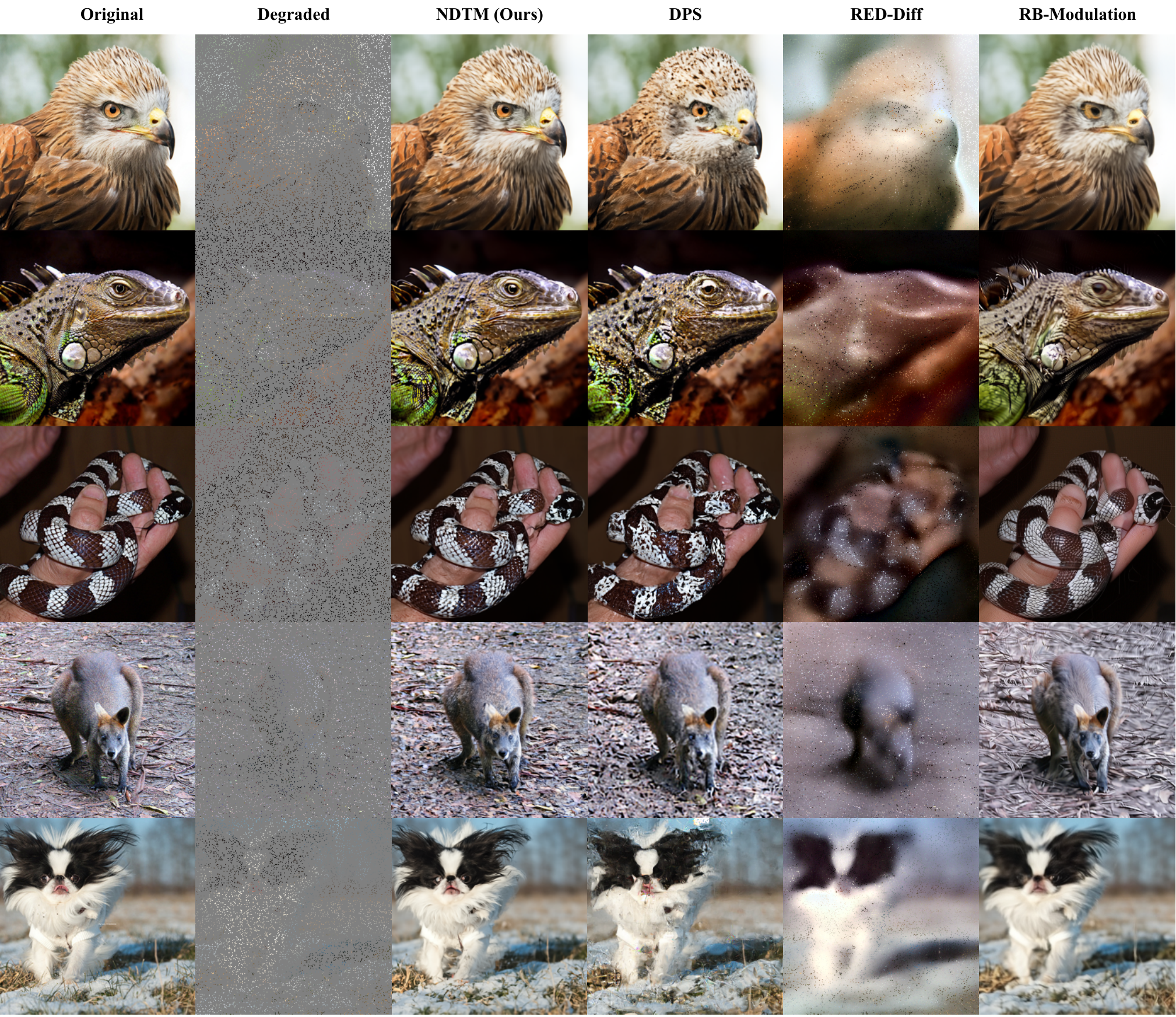}
    \caption{Qualitative comparison between NDTM and top competing baselines (See Table \ref{table:linear_ip}) on the Random Inpainting (90\%) Task for ImageNet. NDTM performs better or/par compared to other baselines. Best viewed when zoomed in.}
    \label{fig:add_rinp}
\end{figure}
\begin{figure}
    \centering
    \includegraphics[width=\linewidth]{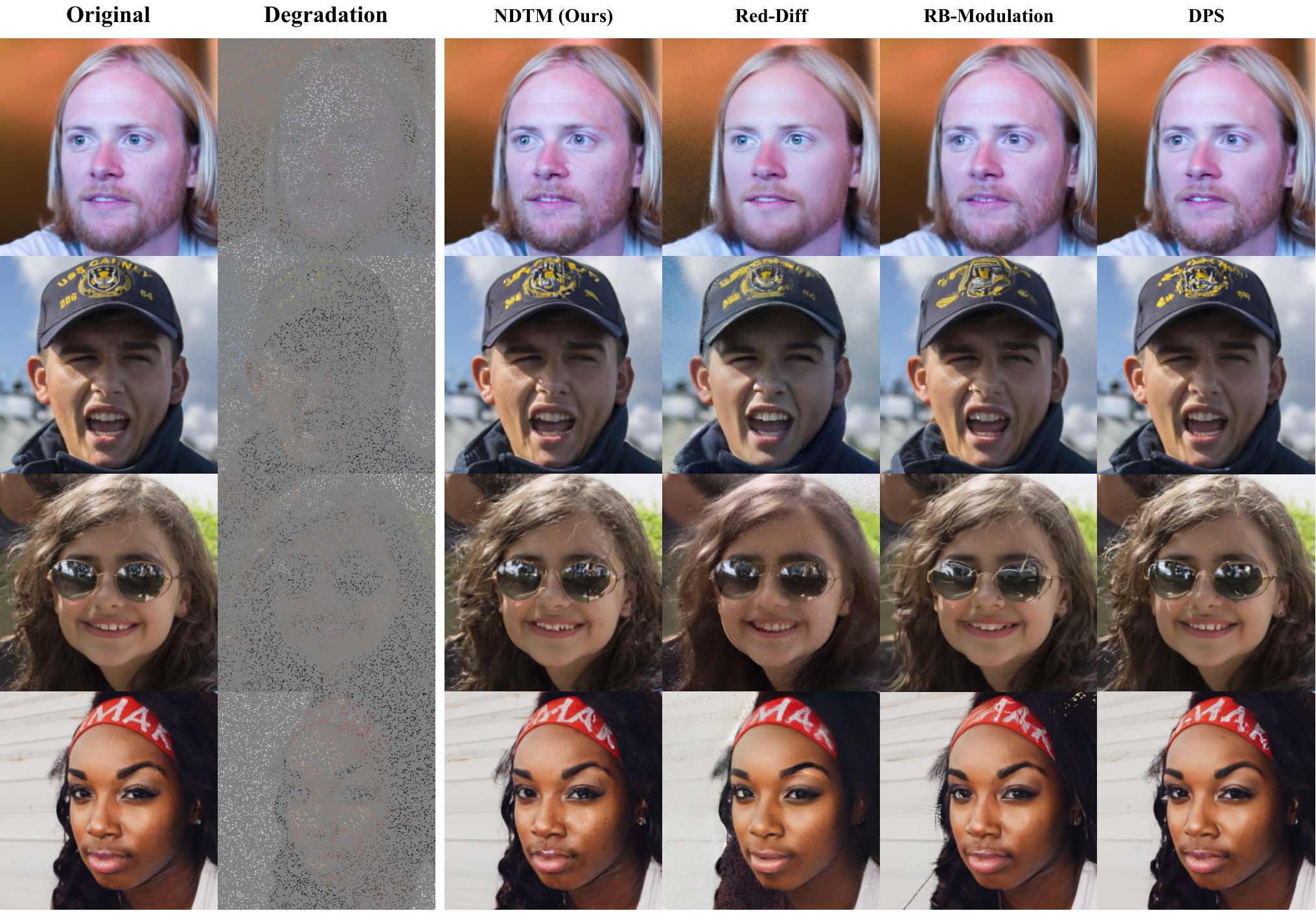}
    \caption{Qualitative comparison between NDTM and top competing baselines (See Table \ref{table:linear_ip}) on the Random Inpainting (90\%) Task for FFHQ. NDTM performs better or/par compared to other baselines. Best viewed when zoomed in.}
    \label{fig:add_rinp_ffhq}
\end{figure}

\begin{figure}
    \centering
    \includegraphics[width=\linewidth]{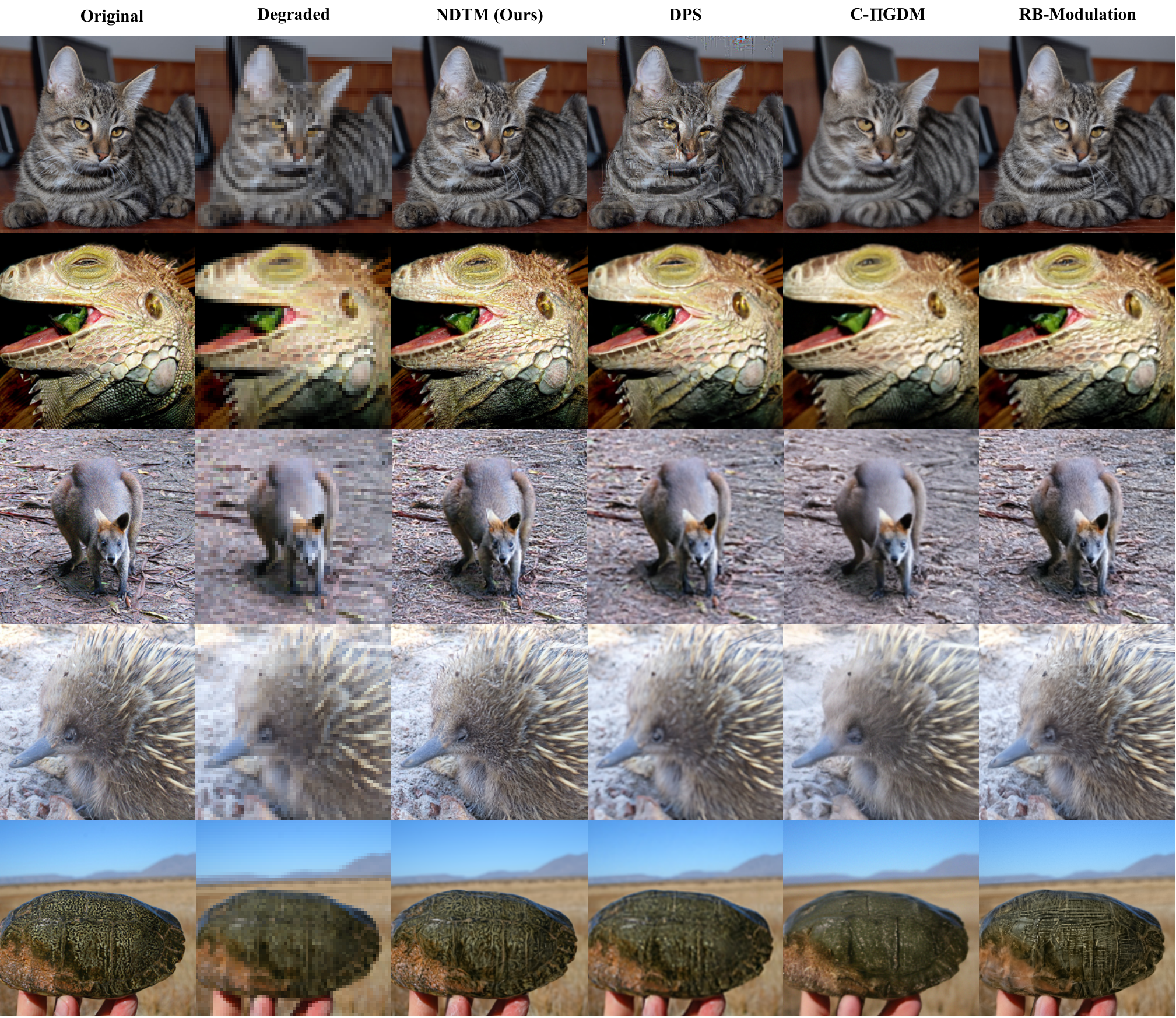}
    \caption{Qualitative comparison between NDTM and top competing baselines (See Table \ref{table:linear_ip}) on 4x super-resolution task for ImageNet. NDTM performs better or/par compared to other baselines. Best viewed when zoomed in.}
    \label{fig:add_superres}
\end{figure}

\begin{figure}
    \centering
    \includegraphics[width=\linewidth]{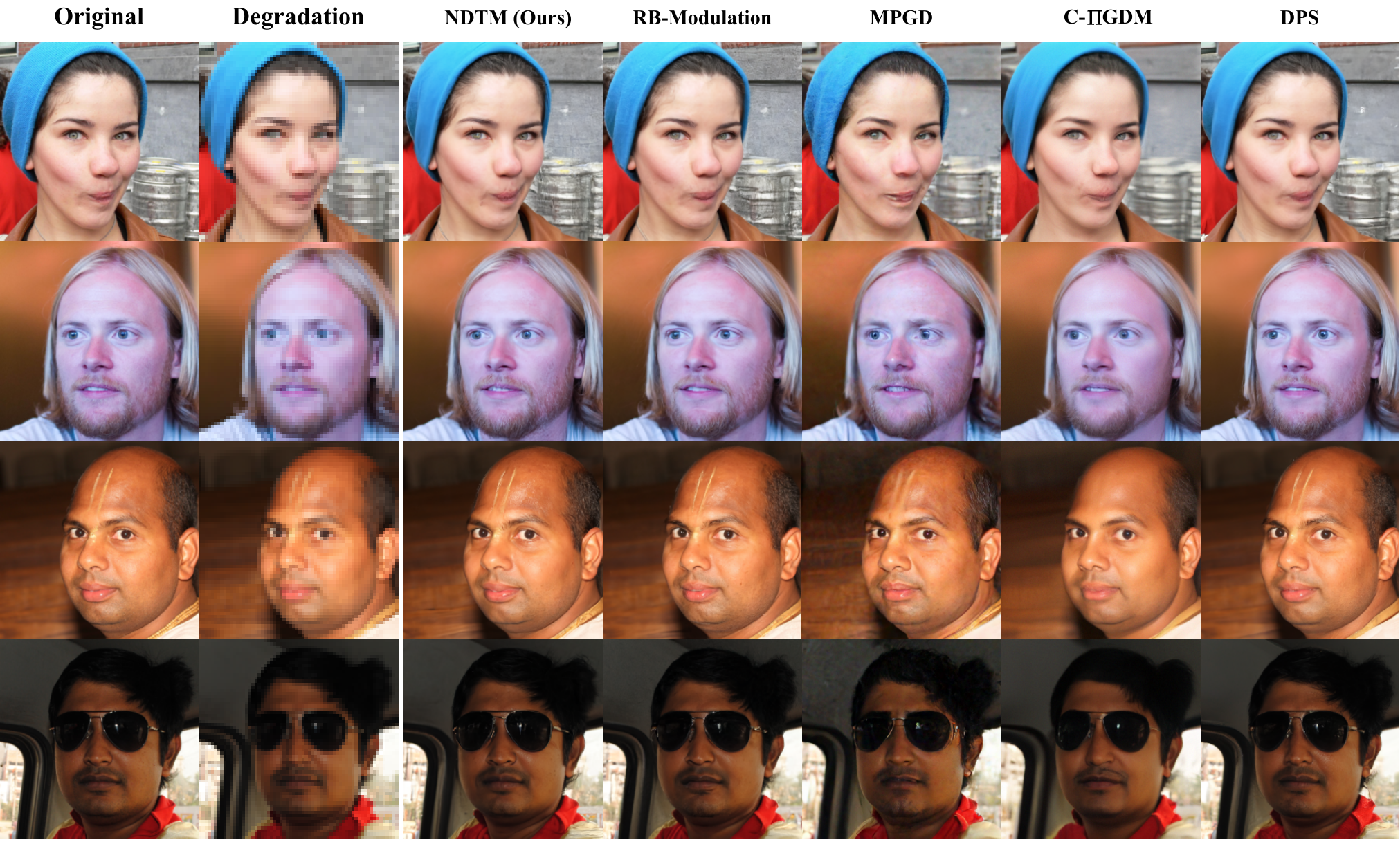}
    \caption{Qualitative comparison between NDTM and top competing baselines (See Table \ref{table:linear_ip}) on 4x super-resolution task for FFHQ. NDTM performs better or/par compared to other baselines. Best viewed when zoomed in.}
    \label{fig:add_superres_ffhq}
\end{figure}

\end{document}